\title[Tight Query Complexity Bounds for DRAs in Private Learning]{On the Query Complexity of Training Data Reconstruction in Private Learning}
\begin{document}

\maketitle

\begin{abstract}%
We analyze the number of queries that a whitebox adversary needs to make to a private learner in order to reconstruct its training data. For $(\epsilon, \delta)$ DP learners with training data drawn from any arbitrary compact metric space, we provide the \emph{first known lower bounds on the adversary's query complexity} as a function of the learner's privacy parameters. \emph{Our results are minimax optimal for every $\epsilon \geq 0, \delta \in [0, 1]$, covering both $\epsilon$-DP and $(0, \delta)$ DP as corollaries}. Beyond this, we obtain query complexity lower bounds for $(\alpha, \epsilon)$ R\'enyi DP learners that are valid for any $\alpha > 1, \epsilon \geq 0$. Finally, we analyze data reconstruction attacks on locally compact metric spaces via the framework of Metric DP, a generalization of DP that accounts for the underlying metric structure of the data. In this setting, we provide the first known analysis of data reconstruction in unbounded, high dimensional spaces and obtain query complexity lower bounds that are nearly tight modulo logarithmic factors. 
\end{abstract}
\section{Introduction}
\label{sec:introduction}
Machine Learning has become increasingly pervasive, finding applications in multiple real world, risk-sensitive workflows. The fascinating potential of machine learning is perhaps most apparent in recent times, with the development of large-scale deep learning algorithms such as Large Language Models (\cite{gpt-3}) and Diffusion Models (\cite{dall-e-model}). These powerful algorithms consume enormous amounts of data to derive meaningful patterns for improved empirical performance. Theoretical analyses \citep{Bresler2020, eldan-memorization, ohadshamir-memorization}, however, reveal that Neural Networks are capable of memorizing the underlying training data, rendering them vulnerable to information leakage and adversarial attacks that can infer sensitive attributes of the training set, or worse, reconstruct one or more training samples entirely. Differential privacy was introduced in an attempt to formalize the notion of privacy protection while deriving meaningful conclusions from statistical information, and has since evolved into the \emph{de-facto} notion of privacy in machine learning. This has inspired a plethora of research in the development of private counterparts to popular machine learning \citep{Kamalika2008, Jain2011, Kamalika2009, Alabi2020} and deep learning \citep{Abadi2016,Jie2022,Song2013} algorithms. However, the promise of differential privacy is often difficult to interpret, and a rigorous quantification of \emph{``how much data privacy does an $(\e,\de)$-DP guarantee actually confer?"} is important to develop sufficiently private algorithms.

\noindent Analysis in this setting involves measuring the level of protection conferred by private learners against information leakage, a quantity that is largely specified by an adversary's success rate in performing a particular class of attacks. For instance, \cite{Yeom2018} demonstrate that, for the class of Membership Inference Adversaries (MIAs), where the objective is to infer the membership status of a target individual in the training set of a private learner, an adversary cannot perform much better than random guessing if the learner is $\e$-DP with $\e\leq0.4$. On the flip side, a more recent work of \cite{Humphries2020} suggests that, under minimal modeling assumptions, if $\e\geq2$, one can design adversaries that can correctly predict the membership status of a target individual through the output of the $\e$-DP learner with probability $\geq88\%$. This is concerning, since, most practical deployments of $\e$-DP algorithms consider $\epsilon\in[2,4]$ to be sufficiently private (\cite{smartnoise-pdf,apple-privacy}). From the perspective of an individual supplying their sensitive information to a learning algorithm, under the impression that their participation is kept secret, such large lower bounds on the success probability of an attack implies little privacy protection. Therefore, for effective mitigation of privacy risks in this context, $\e$ must be sufficiently small. However, prior work of \cite{Tramer2021} has already established that private learning in the small $\e$ regime yields models that perform significantly worse than their non-private counterparts, and find limited applicability in real-world deployments.

\noindent Membership status, however, may not be the security concern of interest in every scenario. For instance, an individual's online presence in a social network platform is generally public, and therefore may not qualify as sensitive information. However, if an adversary is able to \emph{reconstruct} private message transcripts or the individual's personal details, their privacy is at risk. In machine learning tasks, this would be an instance of a Training Data Reconstruction Attack (DRA), wherein, the adversary attempts to \emph{reconstruct} a target sample of the training dataset through the output of a learner. \cite{carlini2021extracting} have demonstrated that state of the art language models are susceptible to simple reconstruction adversaries that can perfectly reconstruct personally identifiable information and 128-bit UUIDs through repeated queries to the model, even when the sequences appear just once in the training set. This is particularly concerning in recent times, as we witness widespread adoption of powerful algorithms that necessitate the use of large scale, diverse training sets, procured from various sources. The results of \cite{Balle2022,carlini2021extracting,zhu2019deep} indicate that unauthorized recovery, and subsequent misuse, of an individual's sensitive information is a very tangible threat. Therefore, it is imperative that learning algorithms be scrutinized for their resilience to training data reconstruction attacks.

\noindent Privacy analysis in this setting has some nice properties. Intuitively, reconstruction must be harder than membership inference, since the latter involves recovering a single binary random variable, while the former may require reconstructing samples that are supported on some arbitrary domain beyond $\{0,1\}^d$ . This leads us to the question \emph{`can DP imply meaningful protection against DRAs for a larger range of $\e$, even when membership is compromised?'}. Empirically, prior work of \cite{Balle2022} has established that this hypothesis is indeed true. In this work, we answer this question by studying the theoretical underpinnings of a private algorithm's resilience to such reconstruction adversaries.

\noindent To facilitate a formal analysis, we must first identify the parameters that characterize \emph{protection} offered by a private learner. For $(\e,\de)$-DP learners, that there must be a dependence on the privacy parameters ($\e,\de$) is obvious. Furthermore, the empirical success of the reconstruction adversaries in \cite{carlini2021extracting} indicate that the number of interactions between the adversary and the learner, which we call the adversary's query complexity $n$, is crucial to the success of the attack. Finally, we note that privacy violations are largely \emph{context-dependent}. For instance, it is public information that Bill Gates is a billionaire; if an adversary is able to recover a rough approximation of his net worth with error of the order of $10^9$, it is very unlikely to be a violation of his privacy. However, if the adversary's reconstruction has an error of the order $10^2$, this level of precision suggests a very detailed and potentially invasive investigation into his financial affairs, thereby raising privacy concerns. Therefore, the level of protection offered by a private learner against reconstruction adversaries is also parameterized by the reconstruction error tolerance $\beta$, i.e., the precision to which a worst-case adversary is able to reconstruct a training sample, through the output of the private learner. From the learner's point of view, $\beta$ is the \emph{threshold of non-privacy} -- if an adversary is able to reconstruct any training sample with error $<\beta$, it counts as a privacy violation with respect to DRAs. Intuitively, this can be thought of as a \emph{smoothed} version of the notion of blatant non-privacy proposed in the seminal work of \cite{DinurNissim}. To this end, we seek answers to the following question: \emph{``What is the maximum number of queries that a private learner can safely answer, while avoiding highly accurate reconstruction of training samples by any adversary?''}

\subsection{Contributions}
\label{subsec:contributions}
Our work aims to present a comprehensive analysis of the effectiveness of training data reconstruction attacks against private learners. To this end, we establish \emph{tight non-asymptotic bounds} on the query complexity of \emph{any informed reconstruction adversary}, as a function of the privacy parameters of the learner and the system's purported tolerance to reconstruction attacks.  Our contributions are summarized as follows:

\paragraph{Reconstruction Attacks on $(\epsilon, \delta)$ Differentially Private Learners} Our first result analyzes data reconstruction attacks on $(\epsilon, \delta)$-DP learners whose training dataset is supported on some arbitrary compact metric space. In particular, we operate under a similar threat model to that of \cite{Balle2022, Guo2022} (see Section \ref{sec:problem-formulation} for a detailed overview) and tightly characterize the query complexity, i.e. the number of queries that the attacker needs to make, as a function of $\epsilon, \delta$ and the non-privacy threshold $\beta$. The result is informally stated below:
\begin{theorem}[DRAs on DP Learners (Informal)]
\label{thm:dra-dp-informal}
Let $\cM$ be an $(\epsilon, \delta)$-DP learning algorithm whose training data is supported on a compact metric space $(\cZ, \rho)$. Then, any data reconstruction adversary $\cA$ needs to make at least $\Omega(\tfrac{\ln(\nicefrac{\diam(\cZ)^2}{\beta^2})}{\ln(\tfrac{e^\epsilon + 1}{2(1-\delta)})})$ queries to $\cM$ in order to ensure a squared reconstruction error of at most $\beta^2$. Furthermore, the obtained query complexity is minimax optimal upto constant factors.
\end{theorem}
To the best of our knowledge, Theorem \ref{thm:dra-dp-informal} is the first known analysis of DRAs that holds for arbitrary compact metric spaces, and is minimax optimal for all $(\epsilon, \delta)$ regimes. In particular, it gives minimax optimal query complexities of $\Omega(\tfrac{\ln(\nicefrac{\diam(\cZ)^2}{\beta^2})}{\ln(\tfrac{e^\epsilon + 1}{2})})$ for $\epsilon$-DP and $\Omega(\tfrac{\ln(\nicefrac{\diam(\cZ)^2}{\beta^2})}{\ln\left(\nicefrac{1}{1-\delta}\right)})$ for $(0, \delta)$-DP learners.
\paragraph{Reconstruction Attacks on $(\alpha, \epsilon)$-R\'enyi DP Learners} Within the framework of Theorem \ref{thm:dra-dp-informal}, we analyze DRAs on $(\alpha, \epsilon)$-R\'enyi DP learners and obtain a query complexity lower bound of $\Omega(\tfrac{\ln(\nicefrac{\diam(\cZ)^2}{\beta^2})}{\ln(\frac{1}{(1 + e^{\gamma \epsilon})^{-1} + (1 + e^{\gamma \epsilon})^{\nicefrac{-1}{\gamma}}})})$ where $\gamma = 1 - \nicefrac{1}{\alpha}$. To the best of our knowledge, this is the first non-asymptotic analysis of data reconstruction on R\'enyi DP learners which holds for any $\alpha > 1$ and $\epsilon \geq 0$. This result also recovers the minimax optimal $\epsilon$-DP bound of Theorem \ref{thm:dra-dp-informal} as $\alpha \to \infty$. A key component of our proof is a novel total variation bound on the output distributions of R\'enyi DP learners, which could be of independent interest.  

\paragraph{Metric Differential Privacy and Reconstruction Attacks on Locally Compact Spaces} We now extend our analysis beyond standard differential privacy by considering the notion of \emph{Metric Differential Privacy} or \emph{Lipschitz Privacy} \citep{Chatzikokolakis2013, Koufogiannis2015, Koufogiannis2016, Imola2022, boedihardjo2022private}, a privacy formalism that generalizes differential privacy by accounting for the underlying metric structure of the data (beyond the Hamming metric). Operating under this broader notion of privacy, we analyze DRAs on $\epsilon$-metric DP learners, whose training data is supported on \emph{arbitrary locally compact metric spaces} (which may be unbounded) and obtain tight query complexity bounds. Our result is informally stated as follows:
\begin{theorem}[DRAs on Metric DP Learners (Informal)]
\label{thm:dra-mdp-informal}
Let $\cM$ be an $\epsilon$-metric DP learning algorithm whose training data is sampled from a locally compact metric space $(\cZ, \rho)$. Let $\Tilde{d}$ be the metric entropy of the unit ball in $(\cZ,\rho)$.  Then, any data reconstruction adversary $\cA$ needs to make at least $\Omega\left(\tfrac{\Tilde{d}}{\epsilon^2 \beta^2}\right)$ queries to $\cM$ in order to ensure a squared reconstruction error of at most $\beta^2$. The obtained query complexity is minimax optimal upto logarithmic factors.\end{theorem}
To the best of our knowledge, Theorem \ref{thm:dra-mdp-informal} is the first known analysis of data reconstruction attacks that directly applies to unbounded metric spaces (such as all of $\bR^d$), as well as the first result that aims to quantify the semantics of Metric Differential Privacy. To complement these results, we also demonstrate in Appendix \ref{app-sec:mdp-in-practice} that commonly used privacy-inducing mechanisms and learning algorithms such as the Gaussian Mechanism, Stochastic Gradient Descent, and Projected Noisy Stochastic Gradient Descent, naturally satisfy metric differential privacy with little to no algorithmic modifications.

\noindent Our work analyzes data reconstruction attacks almost entirely from first principles, relying only on the basic definitions of differential privacy (and its variants), classical privacy-inducing mechanisms \citep{Warner-RR,mcsherry2007mechanism} and well-established connections between information divergences and hypothesis testing \citep{tsybakov2004introduction}. 

\section{Notation and Preliminaries}
\label{sec:notation-prelims}
Let $\cD$ be an arbitrary collection of secrets, with samples drawn from the metric space $(\cZ, \rho)$. We assume $(\cZ, \rho)$ is locally compact unless stated otherwise. Note that this property is satisfied by almost every possible data domain, including finite dimensional normed spaces, such as $\reals^{d}$. We denote the Hamming metric as $\rho_{H}$. In practical applications, $\cD$ is modelled as an aggregation of $N$ samples $\{z_{1}, \dots, z_{N}\}$ and the measure of dissimilarity for any two such collections $\cD,\cD'\in \cZ^n$ is given by $\rho(\cD,\cD') = \min_{\sigma\in S_{N}}\sum_{i=1}^{N}\rho(\cD_{i},\cD_{\sigma(i)}')$, where $S_N$ is the set of all permutations of $[N]$. Two datasets $\cD,\cD'$ are \emph{neighboring} if they differ in a single element, i.e., $\rho_H(\cD,\cD')\leq1$.  We use the $\Omega, O$ and $\Theta$ notation to characterize the dependence of our rates on $n,d$ the privacy parameters of the algorithms and the system's threshold of non-privacy, suppressing numerical constants. We use $\gtrsim$ and $\lesssim$ to denote $\geq$ and $\leq$ respectively, modulo numerical constants. Beyond this, we also make use of the packing number of compact spaces (or compact subsets)
\begin{definition}[Packing Number (\cite{Wainwright2019})] An $\eta$-packing of the set $\mathcal{Z}$ with respect to the metric $\rho$ is a set $\{z_{i}\}_{i\in[M]}$ such that for all distinct $v,v'\in[M]$, we have $\rho(z_{v},z_{v'})\geq\eta$. The $\eta$-packing number $M(\eta, \mathcal{Z}, \rho) := \sup\{M\in\mathbb{N}:\exists \text{ an }\eta\text{-packing } z_{1}, ..., z_{M} \text{ of } \mathcal{Z}\}$
\end{definition}
\subsection{Differential Privacy}
\label{subsec:dp-definitions}
\begin{definition}[Pure Differential Privacy (\cite{Dwork2017})]
\label{defn:pure-dp}
    For any $\e\geq0$, a randomized learner $\cM:\cZ^N\to\cH$ is $\e$-differentially private(DP) if, for every pair of neighboring datasets $\cD,\cD'$, the output probability distributions satisfy:
    \begin{equation}
        \forall T \subseteq \cH, \; \bP[\cM(\cD)\in T]\leq e^{\e}\bP[\cM(\cD')\in T].
    \end{equation}
    Or, equivalently, the max divergence $D_\infty$ of the output distributions satisfies:
    \begin{equation}
        D_{\infty} = \sup_{x\in \mathsf{supp}(Q)}\log\left(\frac{P(x)}{Q(x)}\right)\leq\e
    \end{equation}
    where $P,Q$ denote the output distributions $\cM(\cD),\cM(\cD')$, respectively.
    \end{definition}
Informally, DP bounds the change in the output distribution induced by a randomized algorithm when one of the input samples is replaced or removed. The constraint ensures that the output distributions are sufficiently \emph{statistically indistinguishable}, often characterized by a constant contraction in the output space in terms of some statistical divergence. When defined in terms of the R\'enyi Divergence, we obtain the following popular relaxation of differential privacy:
\begin{definition}[R\'enyi Differential Privacy (\cite{Mironov2017})]
    \label{defn:rdp}
    For any $\e\geq0, \alpha>1$, a randomized learner $\cM:\cZ^{N}\to\cH$ is $(\alpha,\e)$-R\'enyi differentially private(RDP) if, for every pair of neighboring datasets $\cD,\cD'$, the R\'enyi Divergence is bounded as follows:
    \begin{equation}
    \label{eqn:rdp-def}
        D_{\alpha}(P||Q) = \frac{1}{\alpha - 1}\log\bE_{x\sim Q}\left[\left(\frac{P(x)}{Q(x)}\right)^{\alpha}\right]\leq\e
    \end{equation}
    where $P$ and $Q$ denote the output distributions $\cM(\cD)$ and $\cM(\cD')$, respectively.  
\end{definition} 
For $\alpha \to \infty$, R\'enyi DP recovers the definition of $\e$-DP (\cite{Mironov2017}). Another popular relaxation of differential privacy, termed approximate differential privacy, measures this change in terms of $(\e,\de)$-indistinguishability.
\begin{definition}[Approximate Differential Privacy(\cite{dwork2006approx})] 
\label{defn:approx-dp}
For any $\e\geq0, \de\in[0,1]$, a randomized learner $\cM:\cZ^N\to\cH$ is $(\e,\de)$ differentially private if for every pair of neighboring datasets $\cD,\cD'$, the output probability distributions $\cM(\cD)$ and $\cM(\cD')$ are $(\e,\de)$ indistinguishable, i.e.,
\begin{equation}
        \forall T \subseteq \cH, \; e^{-\e}(\bP[\cM(\cD')\in T]-\de)\leq\bP[\cM(\cD)\in T]\leq e^{\e}\bP[\cM(\cD')\in T] + \de
    \end{equation}
\end{definition}
\subsection{Metric Differential Privacy}
While there exist several variants and relaxations of DP, the traditional notion is fundamentally restricted to the discrete setting. Specifically, the definition requires that the private inputs to the randomized mechanism must belong to a product space, where collections can break down into natural single elements to allow for \emph{neighboring datasets} to exist. We argue that this requirement is too stringent, particularly in situations where the sensitive information does not belong to a database at all, but is some arbitrary collection of secrets. In unstructured data domains, such as text, it is challenging to establish a natural definition of neighbors, but there does exist a notion of \emph{distinguishability} of representations. For instance, consider a topic classification problem where the author's identity is to be protected. In this case, two representations that are "similar in topic" must remain so in the output of a private mechanism, irrespective of the author. This notion of distance is not natural to the Hamming metric, and requires more sophisticated metric (e.g. Earth Mover's distance (\cite{text-processing})). In fact, general measures also do not break down into natural single elements, as is required by DP (\cite{boedihardjo2022private}). To facilitate privacy protection in such settings, we would need to incorporate the underlying metric structure of the input space of the private algorithm in the parameter that bounds the shift in the output probability distributions, when altering a single input. The following notion incorporates this desiderata, and extends the classical concept of DP to general metric spaces. 
\begin{definition}[Metric Differential Privacy (\cite{Chatzikokolakis2013})]\label{defn:mDP}
 Let $(\cZ,\rho)$ be a locally compact metric space and $\cH$ be measurable. For any $\e\geq0$, a randomized learner $\cM: \cZ^{N}\to\cH$ is $\e$-metric differentially private (mDP) if for every pair of inputs $z,z'\in\cZ$ and any measurable subset $T\subset\cH$,
    \begin{equation}
    \bP[\cM(z)\in T]\leq e^{\e\rho(z,z')}\bP[\cM(z')\in T]
    \end{equation}
\end{definition}
We observe that metric differential privacy (often appearing under the pseudonym of Metric Privacy \citep{boedihardjo2022private} and Lipschitz Privacy \citep{Koufogiannis2016}) is a strict generalization of differential privacy. In fact, as we shall show in Appendix \ref{app-sec:mdp-in-practice}, the two definitions are equivalent when the inputs belong to a product space equipped with the Hamming metric. Beyond this, we also discuss in Appendix \ref{app-sec:mdp-in-practice} how popular privacy preserving mechanisms in the DP literature, such as the Gaussian mechanism, and learning algorithms, such as Projected Noisy SGD (\cite{Feldman2018}) satisfy metric DP with little to no modifications.
\section{Formalizing Data Reconstruction Attacks as a Privacy Game}
\label{sec:problem-formulation}
Our analysis begins by formalizing data reconstruction attacks as a \emph{privacy game} between the learner $\cM$ and the reconstruction adversary $\cA$, which proceeds as follows:\\

\noindent Let $\advData := \{z_{1}, ..., z_{N-1}\}$ be a fixed dataset, comprising of $N-1$ training samples, \emph{that is known to both the learner and the adversary.}

\noindent\textbf{Learner} chooses an arbitrary target sample $\challenge\in\cZ$, appends it to the training dataset $\cD=\advData\cup\{\challenge\}$, and outputs a sample $h$ drawn from its output distribution \footnote{By definition of differential privacy, $\cM$ must be a randomized algorithm, see \cite{complexityDP}} $\cM(\cD)$. 

\noindent\textbf{Adversary} makes $n$ queries to the model, i.e., she draws $n$ samples $h_1, \dots h_n \sim \cM(\cD)$ from the learner's output distribution, and generates a reconstruction $\cA(h_1, \dots, h_n)$ of the target sample $\challenge$. Since adversaries are generally resource bounded in almost all practical settings, we assume that the query complexity of $\cA$, i.e., the number of times that $\cA$ can query $\cM$ is finite.\\

\noindent A prototypical example of $\cM$ would be a logistic regression classifier trained with DP-SGD (with $h \sim \cM(\cD)$ representing the regressor/model weights). Examples of $\cA$ include the GLM attack of \cite{Balle2022}.

\noindent We note that such a privacy game formulation of reconstruction attacks is also the cornerstone of prior works such as \cite{Balle2022} and \cite{Guo2022} (when restricted to the case of $n=1$). In fact, analogous privacy game formulations for MIAs are highly predominant in a wide range of applications including (but not limited to):interpretations of the semantics of DP guarantees \citep{mahloujifar2022optimal,Humphries2020,Yeom2018} and devising auditing strategies for practical deployments of private learning algorithms \citep{salem2023sok, rezaAudit}\\

\noindent Equipped with the above formulation, it is natural to quantitatively analyze data reconstruction attacks via the machinery of two-player zero-sum games \citep{von1947theory}. To this end, an intuitive choice of the utility/payoff function of the learner is given by \footnote{More generally, one can set $u_{\cM} = \bE_{h_1,\dots, h_n \sim \cM(\advData \cup \{z\})}[g(\rho(\cA(h_1, \dots, h_n), z))]$ for some non-negative increasing differentiable function $g$. We choose $g(t) = t^2$ for clarity but our proof techniques extend to arbitrary $g$} $u_{\cM}(\cA, z,n) = \bE_{h_1,\dots, h_n \sim \cM(\advData \cup \{z\})}[\rho(\cA(h_1, \dots, h_n), z)^2]$ and the corresponding value function of the game is given by
\begin{equation}
\label{eqn:dra-value-fn} 
\cV(\cM,n) = \inf_{\cA} \sup_{z \in \cZ} \bE_{h_1,\dots, h_n \sim \cM(\advData \cup \{z\})}[\rho(\cA(h_1, \dots, h_n), z)^2]
\end{equation}
where the infimum is taken over all reconstruction adversaries. Note that, given any error tolerance parameter $\beta \geq 0$, $\cV(\cM, n) \geq \beta^2$ ensures that no data reconstruction adversary $\cA$ that is allowed to make at most $n$ queries to $\cM$ can uniformly obtain an expected squared reconstruction error less than $\beta^2$ on every target point $z \in \cZ$. To this end, we say that an adversary has attained the threshold of non-privacy $\beta$, if for any possible choice of the target point $z$, they are able to reconstruct $z$ upto an expected squared error of at most $\beta^2$. The remainder of our work aims to derive tight lower bounds for $\cV(\cM, n)$ as a function of $n$ and the privacy parameters of the learner $\cM$. This in turn can be used to derive \emph{tight query-complexity lower bounds for informed adversaries}, i.e., quantify the number of times an adversary needs to query the model in order to reach a given threshold of non-privacy. 
\begin{remark}
This instantiation of a powerful white-box reconstruction adversary is borrowed from prior work of \cite{Balle2022}. Although the modeling may seem very stylized, investigating provable mitigations conferred by $(\e,\de)$-DP learners against such worst-case adversaries is a useful exercise -- theoretical lower bounds on the adversary's error suggest that such algorithms implicitly protect against reconstruction of training samples by less powerful, more realistic adversaries. 
\end{remark}    
\section{Data Reconstruction Attacks against Differentially Private Learners}
\label{sec:dra-dp-bounds}
Our first result is a query complexity lower bound for data reconstruction attacks on $(\epsilon, \delta)$ DP learners whose training dataset is supported on a compact metric space $(\cZ, \rho)$. The proof of this result is presented in Appendix \ref{prf:dra-dp-lowerbound-proof}.
\begin{theorem}[Query Complexity Lower Bounds for $(\epsilon, \delta)$-DP Learners]
\label{thm:dra-dp-lowerbound}
Let $(\cZ,\rho)$ be any compact metric space. Consider any arbitrary $\epsilon \geq 0$ and $\delta \in [0,1]$. Then, for any target sample $z \in \cZ$ in the training dataset of an $(\epsilon, \delta)$ differentially private learner $\cM$, and any data reconstruction adversary $\cA$ that queries the learner $n$ times, the following holds:
\small
\begin{align*}
    \cV(\cM, n) \gtrsim \ \diam(\cZ)^{2} \left(\frac{2(1-\delta)}{e^\epsilon + 1}\right)^n
\end{align*} \normalsize
where $\diam(\cZ) = \max_{z, z^{\prime} \in \cZ} \rho(z, z^{\prime})$.

\noindent Consequently, for any fixed non-privacy threshold $\beta \geq 0$, the number of times $n$ that $\cA$ needs to query $\cM$ in order to always ensure an expected squared reconstruction error of at most $\beta^2$, is lower bounded as $n\geq \Omega\left(\tfrac{\ln(\nicefrac{\diam(\cZ)^2}{\beta^2})}{\ln\left(\tfrac{e^\epsilon + 1}{2(1-\delta)}\right)}\right)$
\end{theorem}
We now demonstrate the minimax optimality of the above lower bound by deriving a matching upper bound for the two point metric space $\cZ = \{z_1, z_2 \}$. The proof of this result is presented in Appendix \ref{prf:dra-dp-upperbound-proof}.
\begin{theorem}[Upper Bound for $(\epsilon, \delta)$-DP Learners]
\label{thm:dra-dp-upperbound} 
Let $(\cZ, \rho)$ be a two-point metric space, i.e., $\cZ = \{ z_1, z_2 \}$. Furthermore, let $z \in \cZ$ and $\beta \geq 0$ be arbitrary. Then, there exists an $(\epsilon, \delta)$ differentially private learner $\cM$ whose training dataset contains $z$, and a reconstruction adversary $\cA$ which makes $\Theta\left(\tfrac{\ln(\nicefrac{\diam(\cZ)^2}{\beta^2})}{\ln\left(\tfrac{e^\epsilon + 1}{2(1-\delta)}\right)}\right)$ queries to $\cM$ and achieves an expected squared error of $\beta^2$.
\end{theorem}
Our upper bound construction is an \emph{$(\epsilon, \delta)$ DP variant} of the seminal randomized response algorithm \citep{Warner-RR}. This choice is motivated by the fact that the $(\epsilon, \delta)$  randomized mechanism $\cM$ used in our proof is known to be \emph{complete among the class of $(\epsilon, \delta)$ DP algorithms}, i.e., any $(\epsilon, \delta)$ DP algorithm can be represented as a composition of $\cM$ with some (possibly) randomized algorithm $T$ (see \cite{kairouz2015composition}, \cite{murtaghvadhan} Lemma 3.2 and \cite{bunsteinke} ) We also highlight that our upper bound construction is general enough to be directly extendable to $k$-point metric spaces $\cZ = \{ z_1, \dots, z_k \}$ and can potentially be extended to arbitrary compact metric spaces $\cZ$ via covering arguments. 

\paragraph{Applicability and Minimax Optimality} We note that the lower bound of Theorem \ref{thm:dra-dp-lowerbound} holds for any compact metric space $(\cZ, \rho)$. Furthermore, Theorem \ref{thm:dra-dp-lowerbound} and Theorem \ref{thm:dra-dp-upperbound} are applicable for every $\epsilon \geq 0, \delta \in [0, 1]$. Thus, our obtained query complexity guarantee is \emph{minimax optimal in all regimes}, which, to the best of our knowledge, is the first such result of its kind. To this end, \emph{our results cover both pure DP (or $\epsilon$-DP) and $(0, \delta)$ DP}. In particular, Theorem \ref{thm:dra-dp-lowerbound} implies a query complexity lower bound of $\Omega(\tfrac{\ln(\nicefrac{\diam(\cZ)^2}{\beta^2})}{\ln\left(\tfrac{e^\epsilon + 1}{2}\right)})$ for $\epsilon$-DP and $\Omega(\tfrac{\ln(\nicefrac{\diam(\cZ)^2}{\beta^2})}{\ln\left(\nicefrac{1}{1-\delta}\right)})$ for $(0, \delta)$ DP. Both these guarantees are minimax optimal as per Theorem \ref{thm:dra-dp-upperbound}. 


\paragraph{Bounded Domain Assumption} Although the bound in Theorem \ref{thm:dra-dp-lowerbound} is minimax optimal, and to the best of our knowledge, the most general such result of its kind, we note that it requires $\diam(\cZ) < \infty$. We emphasize that this assumption is not specific to our analysis, and is in agreement with several prior works that investigate the protection offered by traditional DP (including variants and relaxations) algorithms \citep{Yeom2018,Tramer2020,Guo2022,Balle2022}. We conjecture that this assumption cannot be avoided in the analysis of traditional DP algorithms without making stringent modelling assumptions on the either the private learner, the data distribution, or the adversary's reconstruction algorithm. This is due to the fact that the standard notion of differential privacy (as defined in Definitions \ref{defn:pure-dp} and \ref{defn:approx-dp}) is oblivious to the underlying metric structure of the input space, thereby yielding worst-case behavior. In Section \ref{sec:dra-mdp-bounds}, we remove this boundedness assumption and extend the analysis to arbitrary metric spaces, presenting the first query complexity bounds for data reconstruction attacks against metric DP learners.
\subsection{Comparison to Prior Work}
\label{subsec:kamalika-comparison}
To the best of our knowledge, the result closest to Theorem \ref{thm:dra-dp-lowerbound} in prior works is Theorem 1 of \cite{Guo2022}, which considers data reconstruction attacks under the same threat model as ours (see Section \ref{sec:problem-formulation}), but is restricted to $(2, \epsilon)$-R\'enyi DP and $n=1$. Furthermore, it requires the data domain $\cZ$ to be a compact subset of $\bR^d$. For the sake of completeness, we restate their result as follows:
\begin{theorem}[Theorem 1 \cite{Guo2022}]
\label{thm:guo-2022}
For a target sample $z\in\cZ\subseteq\reals^d$ in the training set of a $(2,\e)$-R\'enyi DP learner, the MSE of a reconstruction attack that outputs an unbiased estimate $\hat{z}$ of $z$ upon observing $h\leftarrow\cM(\advData\cup\{z\})$ follows $\bE_h[\|\hat{z(h)}-z\|^{2}_{2}] \geq \frac{\sum_{i=1}^{d}\mathsf{diam}_{i}(\cZ)^{2}}{4(e^{\e}-1)}$. The i-th coordinate-wise diameter is given by $ \mathsf{diam}_{i}(\cZ):=\underset{z,z'\in\cZ, z_{j} = z'_{j}\forall j\neq i}\sup|z_{i}-z_{i}'|$.
\end{theorem}
\paragraph{Invalid Lower Bound} As we shall prove in Appendix \ref{app-subsec:guo-comparison}, \emph{the lower bound implied by Theorem 1 of \cite{Guo2022} is invalid for any $\epsilon < \ln(1 + \nicefrac{d}{4})$}, i.e., it violates trivial upper bounds implicit in the notion of the squared distance in bounded domains (this can also be intuitively seen by observing that the RHS diverges to infinity exponentially fast as $\epsilon \to 0$). To this end, the above lower bound result is vacuous for a surprisingly large range of $\epsilon$, particularly in high dimensional settings. Even for elementary use-cases like the MNIST dataset \citep{Lecun1998}, where $d=784$, the lower bound of \cite{Guo2022} is vacuous for any $\epsilon < 5.28$. We recall that practical deployments of DP algorithms typically use $\epsilon \in [2, 4]$ \citep{apple-privacy}. On the contrary, our results in Theorem \ref{thm:dra-dp-lowerbound} and \ref{thm:dra-dp-upperbound} are minimax optimal in all regimes $\epsilon \geq 0, \delta \in [0,1]$. Beyond this, we also establish a query complexity lower bound for $(\alpha, \epsilon)$ R\'enyi DP in Corollary \ref{cor:dra-renyi-dp}, which holds for any $\alpha > 1, \epsilon \geq 0$ (as opposed to the specific case of $\alpha = 2$, as in Theorem 1 of \cite{Guo2022})

\paragraph{Restriction to compact subsets of $\bR^d$} We highlight that Theorem 1 of \cite{Guo2022} is restricted only to compact subsets of $\bR^d$ whereas our lower bound applies to any compact metric space. In fact, the validity of \cite{Guo2022} Theorem 1 is dependent on the coordinate wise diameter $\diam_i(\cZ)$ being well defined. We note that there exist several applications where this condition may not be satisfied. For instance, let $\cZ$ be a collection of strings of varying lengths (where the maximum length is finite), equipped with the edit distance metric. The coordinate-wise edit distance is the maximum difference between corresponding characters of any two strings of the same length, which is not well-defined when $\cZ$ is composed of variable-length strings. Thus, their lower bound does not apply to this setting. On the contrary, our result in Theorem \ref{thm:dra-dp-lowerbound} is still valid, since $\mathsf{diam}(\cZ)$ is well-defined and represents the maximum dissimilarity between \emph{any} two strings in $\cZ$.
\subsection{Data Reconstruction Attacks on R\'enyi DP Learners}
\label{subsec:dra-rdp-bounds}
We now consider reconstruction attacks on $(\alpha, \epsilon)$R\'enyi DP learners (for $\alpha > 1, \epsilon \geq 0$). To this end, we derive the following query complexity lower bound by extending the proof technique of Theorem \ref{thm:dra-dp-upperbound} to the notion of R\'enyi DP. A key step in the proof involves deriving a novel Total Variation bound between the output distributions of R\'enyi DP Learners (Lemma \ref{lem:approx-rdp-tv} in Section \ref{sec:proof-sketch}), which could be of independent interest.

\begin{corollary}[Query Complexity Lower Bounds for $(\alpha,\epsilon)$-R\'enyi DP]
\label{cor:dra-renyi-dp} 
Let $(\cZ,\rho)$ be any compact metric space. Consider any arbitrary $\alpha > 1$ and $\epsilon \geq 0$. Then, for any target sample $z \in \cZ$ in the training dataset of an $(\alpha, \epsilon)$-R\'enyi DP learner $\cM$, and any data reconstruction adversary $\cA$ that queries the learner $n$ times, the following holds:
\small\begin{align*}
    \cV(\cM, n) \gtrsim \ \diam(\cZ)^{2} \left[\frac{1}{e^{\gamma \epsilon} + 1} + \frac{1}{\left(e^{\gamma \epsilon} + 1\right)^{\nicefrac{1}{\gamma}}}\right]^n
\end{align*}\normalsize
where $\gamma = 1 - \nicefrac{1}{\alpha} \in (0, 1)$ and $\diam(\cZ) = \max_{z, z^{\prime} \in \cZ} \rho(z, z^{\prime})$. 

\noindent Consequently, for any fixed non-privacy threshold $\beta \geq 0$, the number of times $n$ that $\cA$ needs to query $\cM$ in order to always ensure an expected squared reconstruction error of at most $\beta^2$, is lower bounded as follows: \small
\begin{align*}
    n \geq \Omega\left(\tfrac{\ln(\nicefrac{\diam(\cZ)^2}{\beta^2})}{\ln\left(\frac{1}{(1 + e^{\gamma \epsilon})^{-1} + (1 + e^{\gamma \epsilon})^{\nicefrac{-1}{\gamma}}}\right)}\right)
\end{align*} \normalsize
\end{corollary}
We note that Corollary \ref{cor:dra-renyi-dp} applies for any $\alpha > 1, \epsilon \geq 0$, in contrast to prior work \citep{Guo2022} which only considers the case of $\alpha = 2$. Furthermore, as $\alpha \to \infty$, Corollary \ref{cor:dra-renyi-dp} exactly recovers the minimax optimal query complexity lower bound for $\epsilon$-DP learners as implied by Theorem \ref{thm:dra-dp-lowerbound}.
\section{Data Reconstruction Attacks on Metric DP Learners}
\label{sec:dra-mdp-bounds}
We now present our results for data reconstruction attacks on locally compact linear spaces by considering metric DP learners. In this setting, we obtain the following lower bound, which we prove in Appendix \ref{prf:dra-mdp-lowerbound-proofs}
\begin{theorem}[Lower Bounds for $\epsilon$ Metric DP Learners]
\label{thm:dra-mdp-lowerbound}
Let $(\cZ, \rho)$ be any locally compact metric space and let $\Tilde{d} = \log M(\mathbb{B}(\cZ), \rho, \nicefrac{1}{2})$, where $\mathbb{B}(\cZ)$ is the unit ball in $\cZ$. For any target sample $z \in \cZ$ in the training dataset of an $\epsilon$ metric differentially private learner $\cM$, and any data reconstruction adversary $\cA$ that queries the learner $n$ times, $\cV(\cM, n) \geq \ \Omega\left(\frac{\Tilde{d}}{n \epsilon^2}\right)$. Consequently, for any fixed non-privacy threshold $\beta \geq 0$, $\cA$ needs to query $\cM$ at least $\Omega(\nicefrac{\Tilde{d}}{\epsilon^2 \beta^2})$ times in order to always ensure an expected squared reconstruction error of at most $\beta^2$.
\end{theorem}
We note that for $(\bR^d, \|\cdot\|)$, (or more generally, any finite dimensional normed space), $\Tilde{d} = \Theta(d)$ and thus, the query complexity lower bound becomes $\Omega(\nicefrac{d}{\epsilon^2 \beta^2})$. We complement this with an upper bound that is tight modulo logarithmic factors. The proof is presented in Appendix \ref{prf:dra-mdp-upperbound-proofs}
\begin{theorem}[Upper Bound for $\epsilon$ Metric DP Learners]
\label{thm:dra-mdp-upperbound}
Let $\cM$ be an $\epsilon$ metric differentially private learner whose training datapoints are elements of the normed space $(\bR^d, \| \cdot \|_2)$. Let $z \in \bR^d$ be any arbitrary sample in the training dataset of $\cM$ and let $\beta \geq 0$ be arbitrary. Then, there exists a reconstruction adversary $\cA$ which makes $\Theta(\tfrac{d \log^2(d)}{\epsilon^2 \beta^2})$ queries to $\cM$ in order to achieve an expected squared error of at most $\beta^2$.
\end{theorem}
Unlike prior works on data reconstruction that require some boundedness assumption on the data domain, \citep{Yeom2018, Tramer2020, Guo2022, stock2022defending}, Theorem \ref{thm:dra-mdp-lowerbound} and Theorem \ref{thm:dra-mdp-upperbound}, are, to the best of our knowledge, the first near-optimal analysis of training data reconstruction for unbounded metric spaces
\paragraph{Dimension dependence} The nearly-matching upper and lower bounds of Theorem \ref{thm:dra-mdp-lowerbound} and Theorem \ref{thm:dra-mdp-upperbound} captures the inherent hardness of the data reconstruction problem in high dimensions, and thus provides solid theoretical grounding to the observations of \cite{Balle2022} which shows that the performance of data reconstruction adversaries scales inversely with dimension. The intuition may be formalized as follows: let $\cZ$ be a finite dimensional vector space. Any target sample $z\in\cZ$ has $d$-degrees of freedom. From the lens of an adversary, the complexity of reconstructing the target sample must grow with $d$. Therefore, the number of queries the adversary needs to make to the learner to achieve good quality reconstruction must scale proportional to $d$, which is accurately captured in Theorem \ref{thm:dra-mdp-lowerbound}. 
\section{Proof Sketch}
\label{sec:proof-sketch}
Our analysis hinges on the following insight: Suppose the target sample $z$ is chosen from some finite set $S = \{z_1, \dots, z_k\}$ which is apriori known to the adversary. The task of the adversary then reduces to inferring (or testing) which one of $z_1, \dots, z_k$ is actually the target sample $z$, by using $n$ samples from $\cM(\advData \cup \{z\})$. Intuitively, this task of testing from samples cannot be harder than the original data reconstruction problem (wherein $z \in \cZ$ can be arbitrary). Such \emph{testing reductions} have a rich history in theoretical CS \citep{goldreich1998property}, learning theory \citep{kearns1994introduction} and statistics \citep{tsybakov2004introduction}. We complement this insight with the \emph{statistical indistinguishability} interpretation of differential privacy and its relaxations, i.e., for any private learner $\cM$, the output distributions $\cM(\advData \cup \{z_i\})$ and $\cM(\advData \cup \{z_j\})$ must be sufficiently close, which in turn imposes fundamental limits (quantified via information divergences) on the accuracy of the adversary's testing problem described above. 
\subsection{Proof of the Reconstruction Lower Bounds}
\label{subsec:lb-sketch}
We sketch the proof of Theorem \ref{thm:dra-dp-lowerbound} and Corollary \ref{cor:dra-renyi-dp}. For any $z \in \cZ$, let $P_z$ denote the output distribution of $\cM(\advData \cup \{z\})$ and let $P^n_z$ to be the associated product distribution. Recall that $\cV(\cM, n) = \inf_{\cA} \sup_{z \in \cZ} \bE_{h_{1:n} \sim P^{n}_z}[\rho(\cA(h_{1:n}), z)^2]$. Moreover, let $\Delta = \nicefrac{\diam(\cZ)}{2}$ and let $z_1, z_2$ be the two farthest points in $\cZ$, i.e., $\rho(z_1, z_2) = 2\Delta$. In accordance with the above discussion, we reduce the general reconstruction problem to the case when $z \in S = \{z_1, z_2\}$ by replacing the supremum in the definition of $\cV$ with an average over $S$ and applying Markov's inequality to obtain:
\begin{align*}
    \cV(\cM, n) &\geq\tfrac{\Delta^2}{2} \inf_{\cA} \left[P^{n}_{z_1}[\rho(\cA(h_{1:n}), z_1) \geq \Delta] + P^{n}_{z_2}[\rho(\cA(h_{1:n}), z_2) \geq \Delta]\right] 
\end{align*}
Since $\rho(z_1, z_2) = 2 \Delta$, an application of triangle inequality shows that $\rho(\cA(h_{1:n}), z_1) \geq \Delta$ holds whenever $\rho(\cA(h_{1:n}), z_1) \geq \rho(\cA(h_{1:n}), z_2)$. Similarly, $\rho(\cA(h_{1:n}), z_2) \geq \Delta$ holds whenever $\rho(\cA(h_{1:n}), z_1) \leq \rho(\cA(h_{1:n}), z_2)$. It follows that, 
\begin{align*}
    \cV(\cM, n) &\geq\tfrac{\Delta^2}{2} \inf_{\cA} \left[P^{n}_{z_1}[\rho(\cA(h_{1:n}), z_1) \geq \rho(\cA(h_{1:n}), z_2)] + P^{n}_{z_2}[\rho(\cA(h_{1:n}), z_1) \leq \rho(\cA(h_{1:n}), z_2)]\right] 
\end{align*}
By the definition of $\TV$, one can see that for any adversary $\cA$, the sum of the two probabilities on the RHS can be uniformly lower bounded by $1 - \TV(P^{n}_{z_1}, P^{n}_{z_2})$. Subsequently, using the tensorization properties of $1 - TV$ and recalling the definition of $P^n_z$, we obtain the following lower bound
\begin{align}
\label{eq:dp-dra-joint-eq}
    \cV(\cM, n) &\geq\tfrac{\diam(\cZ)^2}{8} \left(1 - \TV(\cM(\advData \cup \{z_1\}), \cM(\advData \cup \{z_2\}))\right)^{n}
\end{align}
The proof of both Theorem \ref{thm:dra-dp-lowerbound} and Corollary \ref{cor:dra-renyi-dp} are now concluded by bounding the Total Variation between the two output distributions of the private learner $\cM$. To this end, we first consider Theorem \ref{thm:dra-dp-lowerbound} and derive the following $\TV$ upper bound for the output distributions of $(\epsilon, \delta)$ DP learners in Appendix \ref{prf:kasi-lemma-proof}.
\begin{lemma}[TV Bounds for $(\epsilon, \delta)$ DP Learners]
\label{lem:approx-dp-tv}
Let $\cM$ be any arbitrary $(\epsilon, \delta)$ DP learner and let $\cD, \cD^{\prime}$ be any two arbitrary neighbouring datasets (i.e. datasets that differ in only one record). Then, the following holds: \small
\begin{align*} 
    \mathsf{TV}(\cM(\cD), \cM(\cD^{\prime})) \leq 1 - \frac{2(1 - \delta)}{1 + e^{\epsilon}}
\end{align*} \normalsize
\end{lemma}
For Corollary \ref{cor:dra-renyi-dp}, we derive the following novel $\TV$ bound between the output distributions of $(\alpha, \epsilon)$ Renyi DP Learners, which may be of independent interest. We prove this in Appendix \ref{prf:lem-approx-rdp-tv}
\begin{lemma}[TV Bounds for $(\alpha, \epsilon)$ Renyi DP Learners]
\label{lem:approx-rdp-tv}
Let $\cM$ be any arbitrary $(\alpha, \epsilon)$ Renyi DP learner and let $\cD, \cD^{\prime}$ be any two arbitrary neighbouring datasets (i.e. datasets that differ in only one record). Then, the following holds: \small
\begin{align*}
    \mathsf{TV}(\cM(\cD), \cM(\cD^{\prime})) \leq 1 - \frac{1}{1 + e^{\tfrac{(\alpha - 1)\epsilon}{\alpha}}} - \frac{1}{(1 + e^{\tfrac{(\alpha - 1)\epsilon}{\alpha}})^{\tfrac{\alpha}{\alpha - 1}}}
\end{align*} \normalsize
\end{lemma}
The proof of Theorem \ref{thm:dra-dp-lowerbound} and Corollary \ref{cor:dra-renyi-dp} are now completed by respectively applying Lemma \ref{lem:approx-dp-tv} and Lemma \ref{lem:approx-rdp-tv} to equation \eqref{eq:dp-dra-joint-eq}. We also highlight that the proof of Theorem \ref{thm:dra-mdp-lowerbound} proceeds along very similar lines, except that it involves a more delicate choice of the testing set $S = \{z_1, \dots, z_M\}$ using the fact that the definition of Metric DP is sensitive to the metric structure of the input. 
\subsection{Upper Bound for $(\epsilon, \delta)$ DP : Proof Sketch of Theorem \ref{thm:dra-dp-upperbound}}
Since $\cZ = \{ z_1, z_2 \}$, $\diam(\cZ) = \rho(z_1, z_2)$. Without loss of generality, let $z = z_1$, i.e., $\cD = \advData \cup \{ z_1 \}$. As stated earlier, our construction of $\cM$ is a variant of the randomized response mechanism defined as follows:

\begin{enumerate}
        \item Flip a coin $C \sim \mathsf{Bernoulli}(\delta)$. If $C = 1$, then $\cM(\cD) := (1,z)$
    \item Otherwise, if $C=0$, then $\cM(\cD)$ is defined as follows:
    \begin{align*}
        \cM(\cD) :=
        \begin{cases}
    (0,z) \text{ with probability } \tfrac{e^\epsilon}{1+e^{\epsilon}}\\
    (0,\cZ\setminus\{z\}) \text{ with probability } \tfrac{1}{1+e^{\epsilon}}
    \end{cases}
    \end{align*}
\end{enumerate}

\noindent As we shall show in Appendix \ref{prf:dra-dp-upperbound-proof}, $\cM$ is $(\epsilon, \delta)$ differentially private. We now consider a reconstruction adversary $\cA$ which draws $h_1, \dots, h_n \iidsim \cM(\cD)$ as follows: If $C_i = 1$ for some $i \in [n]$, $\cA(h_{1:n}) := z_i$. Else, define $\cA$ as follows:
\begin{equation*} 
\begin{aligned}[c]
        b_i :=
        \begin{cases}
    \sim \mathsf{Bern}(1-e^{-\e}) \text{ if } h_i = (0,z)\\
    0 \text{ if } h_i = (0,\cZ\setminus\{z\})
\end{cases}
\end{aligned}
\begin{aligned}[c]
        \cA(h_{1:n}) :=
        \begin{cases}
    z \text{ if } \wedge_{i=1}^{n} b_i = 1\\
    \cZ\setminus\{z\} \text{ otherwise } \\
\end{cases}
\end{aligned}
\end{equation*}
where $\wedge$ denotes bitwise OR. Conditioned on the event $\{ C_i = 0 \ \forall i \in [n] \}$, $b_i \iidsim \mathsf{Bern}(1-p)$ where $p = \tfrac{1}{e^\epsilon + 1} + e^{-\epsilon} \cdot \tfrac{e^\epsilon }{e^{\epsilon}+1} = \tfrac{2}{e^{\epsilon}+ {1}}$. Note that $\cA(h_{1:n}) = \cZ \setminus \{z\}$ iff $C_i = b_i = 0 \ \forall i \in [n]$. Since this event occurs with probability $(\tfrac{2(1-\delta)}{e^{\epsilon} + 1})^n$, it follows that
\begin{align*}
    \bE_{h_1, \dots, h_n \iidsim \cM(\cD)}[\rho(\cA(h_1, \dots, h_n), z)^2] = \left(\frac{2(1 - \delta)}{e^\epsilon + 1}\right)^{n} \diam(\cZ)^2  
\end{align*}

\noindent To set the RHS to be equal to $\beta^2$, it suffices to set $n = \Theta\left(\tfrac{\ln(\nicefrac{\diam(\cZ)^2}{\beta^2})}{\ln\left(\tfrac{e^\epsilon + 1}{2(1-\delta)}\right)}\right)$

\noindent We note that this adversary uses a randomized decision rule, which is consistent with our modeling assumptions in Section \ref{sec:problem-formulation} and of prior work (\citet{Guo2022,Balle2022}). In Appendix \ref{app-sec:dra-dp-proofs}, we construct a deterministic majority-based adversary and demonstrate that the lower bound is still tight $\ \forall \epsilon \geq 0.042,\de\in[0,1]$.

\subsection{Upper Bound for $\epsilon$ Metric DP: Proof Sketch of Theorem \ref{thm:dra-mdp-upperbound}}
Our upper bound construction for metric DP is based upon the exponential mechanism in $(\bR^d, \|\dot\|_2)$. In particular, for $\mu \in \bR^d$, let $\pi_{\mu}$ be a probability measure on $\bR^d$ whose density with respect to the Lebesgue measure $\mathsf{Leb}$ is given by,
\begin{align*}
    \frac{\mathsf{d} \pi_{\mu, \epsilon}}{\mathsf{d} \mathsf{Leb}}(x) \propto e^{-\epsilon \|x - \mu\|_2}
\end{align*}
For any $z \in \bR^d$, we set the output distribution of $\cM$ to be $\cM(\cD) = \cM(\advData \cup \{z\}) = \pi_{z, \epsilon}$. We verify in Appendix \ref{prf:dra-mdp-upperbound-proofs} that $\cM$ is $\epsilon$ metric DP w.r.t $\|\cdot\|$ since for any $\mu_1, \mu_2 \in \bR^d$, $\tfrac{\mathsf{d} \pi_{\mu_1, \epsilon}}{\mathsf{d} \pi_{\mu_2, \epsilon}} \leq e^{\epsilon \|\mu_1 - \mu_2\|_2}$ due to the triangle inequality. We consider a reconstruction adversary which draws $h_1, \dots, h_n \iidsim \cM(\cD)$ and computes $\cA(h_1, \dots, h_n) = \tfrac{1}{n} \sum_{i=1}^{n} h_i$. It is easy to see that $\bE_{h_1, \dots, h_n \iidsim \cM(\cD)}[\|\cA(h_1, \dots, h_n) - z\|^{2}_{2}] = \frac{\sigma^2}{n}$ where $\sigma^2 = \bE_{x \sim \pi_{z, \epsilon}}[\|x - z\|^{2}_{2}]$. Thus, bounding the reconstruction error reduces to sharply bounding $\sigma^2$. To this end, we use classical results in probability theory on the isoperimetric properties of the exponential distribution \citep{bobkov1997poincare, bobkov2003spectral} in $\bR^d$ as well the connections between moment control and isoperimetry \citep{aida1994moment,huang2021poincare, garg2020scalar} to sharply bound $\sigma^2$ as $\sigma^2 \lesssim \nicefrac{d \log^2(d)}{\epsilon^2}$. It follows that,
\begin{align*}
    \bE_{h_1, \dots, h_n \iidsim \cM(\cD)}[\|\cA(h_1, \dots, h_n) - z\|^{2}_{2}] \lesssim \frac{d \log^2(d)}{n \epsilon^2}
\end{align*}
To make the RHS at most $\beta^2$, it suffices to set $n = \Theta(\tfrac{d \log^2(d)}{\epsilon^2 \beta^2})$ 

%
%

\bibliography{references}
\newpage
\appendix

\section{Additional Notation and Technical Lemmas}
\label{app-sec:tech-lemmas}

\begin{definition}[Total Variation Distance]
\label{def:tv}
Let $P_0$ and $P_1$ be two probability measures on a measurable space $(\cX,\cA)$. The Total Variation distance between $P_0,P_1$ is given by:
\begin{equation}
    \mathsf{TV}(P_0,P_1) = \sup_{T\in\cA}|P_0(T)-P_1(T)|
\end{equation}
\end{definition}

\begin{lemma}[Tensorization of Total Variation Affinity]
\label{lem:tv-tensorization}   
Let $P$ and $Q$ be two probability measures supported on a set $\cX$. For any $n \in \mathbb{N}$, let $P^n$ and $Q^n$ denote the respective product measures of $P$ and $Q$, supported on the set $\cX^n$. Then, the following holds:
\begin{align*}
    \TV(P^n, Q^n) \leq 1 - (1 - \TV(P, Q))^n
\end{align*}
\end{lemma}
\begin{proof}
Let $\Gamma(P, Q)$ denote the set of all couplings of $P$ and $Q$. We make use of the fact that $\TV(P, Q) = \min_{C \in \Gamma(P, Q)} \bP_{(x, y) \sim C}[x \neq y]$ \citep{levin2017markov}. To this end, let $C^*$ denote the $\TV$-optimal coupling of $P$ and $Q$, i.e., $C^* \in \Gamma(P, Q)$ such that $\TV(P, Q) = \bP_{(x, y) \sim C^*}[x \neq y]$. We now construct a coupling of $P^n$ and $Q^n$ as follows. Let $X = (x_1, x_2, \dots, x_n)$ and $Y = (y_1, y_2, \dots, y_n)$ be two random variables on $\cX^n$ such that $(x_i, y_i) \stackrel{i.i.d}{\sim} C^* \ \forall i \in [n]$. It is easy to see that $(X, Y)$ is a coupling of $P^n$ and $Q^n$. It follows that:
\begin{align*}
    \TV(P^n, Q^n) &\leq \bP[X \neq Y] \\
    &= 1 - \bP[X = Y] \\
    &= 1 - \prod_{i=1}^{n} \bP[x_i = y_i] \\
    &= 1 - \prod_{i=1}^{n} (1 - \bP[x_i \neq y_i]) \\
    &= 1 - (1 - \TV(P, Q))^n
\end{align*}
where the third step uses the co-ordinatewise i.i.d. structure of $(x_i, y_i)$ and the last step uses the fact that $(x_i, y_i)$ are sampled from the $\TV$-optimal coupling of $P$ and $Q$. Hence, $\TV(P^n, Q^n) \leq 1 - (1 - \TV(P, Q))^n$.
\end{proof}

\begin{definition}[Binary Hypotheses Testing]
\label{def:binary-hypothesis-test}
Let $P_0$ and $P_1$ be two probability measures and $X$ be an observation drawn from either $P_0$ or $P_1$. If the objective is to determine which distribution generated the observation $X$, it corresponds to the following statistical hypothesis testing problem:
\begin{equation}
\begin{aligned}
    & H_0 \mathsf{(null)}&: X \sim P_0\\
    &H_1 \mathsf{(alternate)}&: X\sim P_1
\end{aligned} 
\end{equation}
    A test $\Psi:X\to\{0,1\}$ indicates which hypothesis is true. 
\end{definition}
In a binary hypothesis test, the test $\Psi$ can make two kinds of errors: A type I error(false alarm) corresponds to the case when the null hypothesis is true but rejected, i.e., $\bP_{X\sim P_0}[\Psi(X)=1]$. A type II error(missed detection) is when the null hypothesis is false but retained, i.e., $\bP_{X\sim P_1}[\Psi(X)=0]$.

\begin{lemma}[Variational Representation of Total Variation(\cite{Cam1986AsymptoticMI})]
\label{lem:LeCam}
 For any distributions $P$ and $Q$ on a measurable space $(\cX,)$, we have:
 \begin{equation}
     \inf_{\Psi}(\bP_{X\sim P_0}[\Psi(X)=1] + \bP_{X\sim P_1}[\Psi(X)=0]) = 1 - \mathsf{TV}(P_0,P_1)
 \end{equation}
 where the $\inf$ is over all tests $\Psi$.
\end{lemma}

\begin{definition}[M-ary Hypothesis Test (\cite{Wainwright2019})]
\label{def:m-ary-hypothesis-test}
Let $P_1, ..., P_M$ be $M(\geq 2)$ probability distributions such that $P_j\ll P_k\forall j,k$ and $X$ be an observation drawn from any one of $P_1,...,P_M$. If the objective is to determine which distribution generated the observation $X$, it corresponds to an $M$-ary hypothesis testing problem, i.e., it is a generalization of Definition \ref{def:binary-hypothesis-test} to multiple hypotheses.    
\end{definition}

\begin{lemma}[Fano's Inequality for M-ary Hypotheses (\cite{Wainwright2019})]
\label{lemma:fanos}
Let $P_1, ..., P_M$ be $M(\geq 2)$ probability distributions such that $P_j\ll P_k\forall j,k$ and $X$ be an observation drawn from any one of $P_1,...,P_M$. Let $\Psi:X\to [M]$ be a test that indicates which distribution $X$ was drawn from, i.e., which of the $M$ hypotheses is true. Then, we have
\begin{equation}
    \inf_{\Psi}\max_{1\leq j\leq M} P_{j}[\Psi(X)\neq j]\geq 1-\frac{1/M^{2}\sum_{j,k=1}^{M}\mathsf{KL}(P_j,P_k) + \log 2}{\log(M-1)}
\end{equation}
where the $\inf$ is over all tests $\Psi$ with values in $[M]$
\end{lemma}
\begin{lemma}[Proposition 10 of \cite{Mironov2017}]
\label{lem:rdp-set-bounds}
Let $\cM$ be an $(\alpha, \epsilon)$ Renyi DP learner and let $\cD$ and $\cD^{\prime}$ be neighboring datasets (i.e. datasets differing in only one entry). Then, for any $S \subseteq \cH$, the following holds
\begin{align*}
    \bP[\cM(\cD) \in S] &\leq \left(e^{\epsilon} \bP[\cM(\cD^{\prime}) \in S]\right)^{\nicefrac{(\alpha - 1)}{\alpha}} \\
    \bP[\cM(\cD^{\prime}) \in S] &\leq \left(e^{\epsilon} \bP[\cM(\cD) \in S]\right)^{\nicefrac{(\alpha - 1)}{\alpha}}
\end{align*}
\end{lemma}
\begin{lemma}[KL Upper Bounds for Metric DP Learners]
\label{lem:kl-upper-bounds}
Let $\cM$ be an $\epsilon$ metric differentially private learner. Then, for any two datasets $\cD, \cD^{\prime}$, the KL divergence between the output distributions $\cM(\cD)$ and $\cM(\cD^{\prime})$ is upper bounded as follows:
\begin{align*}
    \mathsf{KL}[\cM(\cD) || \cM(\cD^{\prime})] \leq \frac{\epsilon^2 \rho(\cD, \cD^{\prime})^2}{2}
\end{align*}
\end{lemma}
\begin{proof}
This follows from Lemma 3.8 of  \citet{dwork2016concentrated} by replacing $\epsilon$ with $\epsilon \rho(\cD, \cD^{'})$. 
\end{proof}
\section{Analysis of Data Reconstruction for $(\epsilon, \delta)$ DP}
\label{app-sec:dra-dp-proofs}
\subsection{Proof of Lemma \ref{lem:approx-dp-tv}}
\label{prf:kasi-lemma-proof}
\begin{proof}
Consider any arbitrary $S \subseteq \cH$. By definition of total variation, the following holds
\begin{equation}
\label{eq:tv-basic-upperbound}    
\TV(\cM(\cD), \cM(\cD^{\prime}) \leq \bP[\cM(\cD) \in S] - \bP[\cM(\cD^{\prime}) \in S]
\end{equation}
Note that since $\cM$ is an $(\epsilon, \delta)$ DP mechanism and $\cD, \cD^{\prime}$ are neighboring datasets, the following constraints must be satisfied:
\begin{align}
\label{eq:tv-dp-constraints}
    \bP[\cM(\cD) \in S] &\leq e^{\epsilon} \bP[\cM(\cD^{\prime}) \in S] + \delta \nonumber \\
    \bP[\cM(\cD^{\prime}) \in S] &\leq e^{\epsilon} \bP[\cM(\cD) \in S] + \delta
\end{align}
From \eqref{eq:tv-basic-upperbound} and \eqref{eq:tv-dp-constraints}, it is easy to see that $\TV(\cM(\cD), \cM(\cD^{\prime})$ can be upper bounded by the solution to the following linear program:
\begin{align}
\label{eq:tv-dp-lp}
\ & \min x - y \nonumber \\
\text{subject to } & 0 \leq x \leq 1, \nonumber \\
\ & 0 \leq y \leq 1, \nonumber \\
\ & x \leq e^\epsilon y + \delta, \nonumber \\
\ & y \leq e^\epsilon x + \delta  
\end{align}
We now check that $x = 1 - \tfrac{1 - \delta}{e^{\epsilon} + 1}$ and y = $\tfrac{1 - \delta}{e^{\epsilon} + 1}$ lies in the constraint set of the linear program \eqref{eq:tv-dp-lp}. Clearly, since $\delta \in [0,1]$ and $\epsilon \geq 0$, $0 \leq x \leq 1$ and $0 \leq y \leq 1$ are trivially satisfied. We then note that,
\begin{align*}
    e^{\epsilon}y + \delta &= \frac{e^{\epsilon}(1-\delta)}{e^{\epsilon} + 1} + \delta \\
    &= (1 - \delta)(1 - \frac{1}{e^{\epsilon} + 1}) + \delta \\
    &= 1 - \frac{(1 - \delta)}{e^{\epsilon} + 1} = x
\end{align*}
i.e. the third constraint in \eqref{eq:tv-dp-lp} is satisfied as an equality. Finally,
\begin{align*}
    e^{\epsilon} x + \delta &= e^\epsilon \left[1 - \frac{1 - \delta}{e^\epsilon + 1}\right] + \delta \\
    &= e^\epsilon - \frac{e^\epsilon}{e^\epsilon + 1}(1-\delta) + \delta \\
    &= e^\epsilon - \left[1 - \frac{1}{e^\epsilon + 1}\right](1 - \delta) + \delta \\
    &= e^{\epsilon} + 2\delta - 1 + \frac{1 - \delta}{e^{\epsilon} + 1} \\
    &= e^{\epsilon} + 2\delta - 1 + y \geq y
\end{align*}
where the last inequality follows from the fact that $\epsilon \geq 0$ and $\delta \geq 0$. Thus, the fourth constraint in the linear program \eqref{eq:tv-dp-lp} is also satisfied. From \eqref{eq:tv-basic-upperbound}, \eqref{eq:tv-dp-constraints} an \eqref{eq:tv-dp-lp}, we conclude that,
\begin{align*}
    \TV(\cM(\cD), \cM(\cD^{\prime})) \leq x - y = 1 - \frac{2(1-\delta)}{e^{\epsilon} + 1}
\end{align*}
\end{proof}
\subsection{Lower Bound for $(\epsilon, \delta)$ DP : Proof of Theorem \ref{thm:dra-dp-lowerbound}}
\label{prf:dra-dp-lowerbound-proof}
\begin{proof}
Let $z_1, z_2 \in \cZ$ be the two farthest points in $\cZ$, i.e. $\rho(z_1, z_2) = \diam(\cZ)$. Note that by compactness of $\cZ$, $z_1$ and $z_2$ are guaranteed to exist, and since $\cZ$ is not a singleton, $z_1 \neq z_2$. Let $\cA$ be any arbitrary reconstruction adversary and let $\Delta = \tfrac{\diam(\cZ)}{2}$. For any $z \in \cZ$, let $P_z$ denote the output distribution of $\cM(\advData \cup \{z\})$ and let $P^{n}_{z}$ be its associated product measure (i.e. $P^{n}_z = \bigotimes_{i=1}^{n} P_z$) Note that this means $h_1, \dots h_n \iidsim P_z$ is equivalent to $h_{1:n} \sim P^n_z$ where we use $h_{1:n}$ as a shorthand for $h_1, \dots, h_n$. \\

\noindent By Markov's Inequality, the following holds for any $z \in \cZ$
\begin{align*}
    \bE_{h_1, \dots, h_n \iidsim \cM(\advData \cup \{z\})}[\rho(\cA(h_{1:n}), z)^2] &\geq \Delta^2 \bP_{h_1, \dots, h_n \iidsim \cM(\advData \cup \{z\})}[\rho(\cA(h_{1:n}), z) \geq \Delta] \\
    &= \Delta^2 P^{n}_z[\rho(\cA(h_{1:n}), z) \geq \Delta]
\end{align*}
Taking the supremum over $\cZ$ on both sides,
\begin{align}
\label{eq:lecam-bound-ineq}
    \sup_{z \in \cZ} \bE_{h_1, \dots, h_n \iidsim \cM(\advData \cup \{z\})}[\rho(\cA(h_{1:n}), z)^2] &\geq \Delta^2 \sup_{z \in \cZ} P^{n}_z[\rho(\cA(h_{1:n}), z) \geq \Delta)] \nonumber \\
    &\geq \tfrac{\Delta^2}{2} \left[P^{n}_{z_1}[\rho(\cA(h_{1:n}), z_1) \geq \Delta] + P^{n}_{z_2}[\rho(\cA(h_{1:n}), z_2) \geq \Delta]\right] 
\end{align}
Suppose the event $\rho(\cA(h_{1:n}), z_1) \geq \rho(\cA(h_{1:n}), z_2)$ were true. Then, the following would hold by an application of the triangle inequality : 
\begin{align*}
    \rho(\cA(h_{1:n}), z_1) &\geq \rho(\cA(h_{1:n}), z_2)  \\
    &\geq \rho(z_1, z_2) - \rho(\cA(h_{1:n}), z_1) \\
    &= 2 \Delta - \rho(\cA(h_{1:n}), z_1)
\end{align*}
Thus, we note that $\rho(\cA(h_{1:n}), z_1) \geq \rho(\cA(h_{1:n}), z_2) \implies \rho(\cA(h_{1:n}), z_1) \geq \Delta$ hence $P^{n}_1[\rho(\cA(h_{1:n}), z_1) \geq \Delta] \geq P^{n}_1[\rho(\cA(h_{1:n}), z_1) \geq \rho(\cA(h_{1:n}), z_2)]$. By parallel reasoning, $P^{n}_2[\rho(\cA(h_{1:n}), z_2) \geq \Delta] \geq P^{n}_2[\rho(\cA(h_{1:n}), z_2) \geq \rho(\cA(h_{1:n}), z_1)]$. Substituting into \eqref{eq:lecam-bound-ineq}, we obtain
\begin{align*}
    \sup_{z \in \cZ} \bE_{h_1, \dots, h_n \iidsim \cM(\advData \cup \{z\})}[\rho(\cA(h_{1:n}), z)^2] &\geq \tfrac{\Delta^2}{2} P^{n}_1[\rho(\cA(h_{1:n}), z_1) \geq \rho(\cA(h_{1:n}), z_2)] \\
    &+ \tfrac{\Delta^2}{2}P^{n}_2[\rho(\cA(h_{1:n}), z_1) \leq \rho(\cA(h_{1:n}), z_2)] \\
    &\geq \tfrac{\Delta^2}{2} P^{n}_1[\rho(\cA(h_{1:n}), z_1) \geq \rho(\cA(h_{1:n}), z_2)] \\
    &+ \tfrac{\Delta^2}{2}P^{n}_2[\rho(\cA(h_{1:n}), z_1) < \rho(\cA(h_{1:n}), z_2)] \\
    &\geq \tfrac{\Delta^2}{2} - \tfrac{\Delta^2}{2} P^{n}_1[\rho(\cA(h_{1:n}), z_1) < \rho(\cA(h_{1:n}), z_2)] \\
    &+ \tfrac{\Delta^2}{2}P^{n}_2[\rho(\cA(h_{1:n}), z_1) < \rho(\cA(h_{1:n}), z_2)] \\
    &\geq \tfrac{\Delta^2}{2}[1 - \TV(P^n_1, P^n_2)] \\
    &\geq \tfrac{\Delta^2}{2}[1 - \TV(P_1, P_2)]^n \\
    &= \tfrac{\Delta^2}{2} (1 - \TV(\cM(\advData \cup \{z_1\}), \cM(\advData \cup \{ z_2\})))^n \\
    &\geq \tfrac{\Delta^2}{2} \left(\frac{2(1-\delta)}{e^{\epsilon} + 1}\right)^n
\end{align*}
where the fifth inequality applies Lemma \ref{lem:tv-tensorization} and the last inequality applies Lemma \ref{lem:approx-dp-tv} using the fact that $\advData \cup \{z_1\}$ and $\advData \cup \{ z_2\}$ are datasets differing only in one point. Now, taking an infimum over all adversaries $\cA$ in the LHS, we obtain
\begin{align*}
    \cV(\cM, n) \geq \frac{\diam(\cZ)^2}{8} \left(\frac{2(1-\delta)}{e^{\epsilon} + 1}\right)^n
\end{align*}
By definition of $\cV(\cM, n)$ it is easy to see that if an adversary attains an expected squared reconstruction error of at most $\beta^2$ for every possible target point $z \in \cZ$, $\beta^2$ must be at least $\cV(\cM, n)$, i.e., the following must hold
\begin{align*}
    \beta^2 \geq \cV(\cM, n) \gtrsim \frac{\diam(\cZ)^2}{8} \left(\frac{2(1-\delta)}{e^{\epsilon} + 1}\right)^n
\end{align*}
Rearranging appropriately, we obtain the required query complexity lower bound. 
\end{proof}

\subsection{Proof of Lemma \ref{lem:approx-rdp-tv}}
\label{prf:lem-approx-rdp-tv}
\begin{proof}
The proof of this lemma is similar to that of Lemma \ref{lem:approx-dp-tv}. To this end, we consider any arbitrary $S \subseteq \cH$. By definition of total variation, the following holds
\begin{equation*}
\TV(\cM(\cD), \cM(\cD^{\prime}) \leq \bP[\cM(\cD) \in S] - \bP[\cM(\cD^{\prime}) \in S]
\end{equation*}
Note that since $\cM$ is an $(\alpha, \epsilon)$ Renyi DP mechanism and $\cD, \cD^{\prime}$ are neighboring datasets, the following constraints must be satisfied as per Lemma \ref{lem:rdp-set-bounds}:
\begin{align*}
    \bP[\cM(\cD) \in S] &\leq \left(e^{\epsilon} \bP[\cM(\cD^{\prime}) \in S]\right)^{\gamma} \\
    \bP[\cM(\cD^{\prime}) \in S] &\leq \left(e^{\epsilon} \bP[\cM(\cD) \in S]\right)^{\gamma}
\end{align*}
where $\gamma = \tfrac{\alpha - 1}{\alpha}$. Note that since $\alpha > 1, 0 < \beta < 1$. Subsequently, it is easy to see that $\TV(\cM(\cD), \cM(\cD^{\prime})$ can be upper bounded by the solution to the following optimization problem:
\begin{align}
\label{eq:tv-rdp-lp}
\ & \min x - y \nonumber \\
\text{subject to } & 0 \leq x \leq 1, \nonumber \\
\ & 0 \leq y \leq 1, \nonumber \\
\ & x \leq e^{\gamma\epsilon} y^{\beta}  \nonumber \\
\ & y \leq e^{\gamma\epsilon} x^{\beta} 
\end{align}
We now show that $x = \frac{e^{\gamma \epsilon}}{1 + e^{\gamma \epsilon}}$ and $y = \frac{1}{(1 + e^{\gamma \epsilon})^{\nicefrac{1}{\gamma}}}$ lie in the constraint set of the above problem. We first note that since $\epsilon > 0$ and $0 < \gamma < 1$, $0 \leq x \leq 1$ and $0 \leq y \leq 1$. Hence, the first two constraints are satisfied. Furthermore, $e^{\gamma \epsilon} y^{\gamma} = \tfrac{e^{\gamma \epsilon}}{1 + e^{\gamma \epsilon}} = x$, i.e., the third constraint is satisfied with equality. The final constraint can be verified by observing that:
\begin{align*}
    \frac{1}{(1 + e^{\gamma \epsilon})^{\nicefrac{1}{\gamma} - \gamma}} \leq 1 \leq e^{(\gamma + \gamma^2)\epsilon}
\end{align*}
where we use the fact that $\nicefrac{1}{\gamma} - \gamma \geq 0$ as $\gamma < 1$. Rearranging terms, it follows that:
\begin{align*}
    y = \frac{1}{(1 + e^{\gamma \epsilon})^{\nicefrac{1}{\gamma}}} \leq e^{\gamma \epsilon} \frac{e^{\gamma^2 \epsilon}}{(1 + e^{\gamma \epsilon})^{\gamma}} = e^{\gamma \epsilon} x^\beta 
\end{align*}
Hence, the final constraint is satisfied. Thus, the solution to the optimization problem \ref{eq:tv-rdp-lp} is bounded by $x-y$ and hence,
\begin{align*}
    \TV(\cM(\cD), \cM(\cD^{\prime})) \leq x - y \leq 1 - \frac{1}{1 + e^{\gamma \epsilon}} - \frac{1}{(1 + e^{\gamma \epsilon})^{\nicefrac{1}{\gamma}}}
\end{align*}
\end{proof}

\subsection{$(\alpha, \epsilon)$ Renyi DP : Proof of Corollary \ref{cor:dra-renyi-dp}}
\label{prf:dra-renyi-dp-proof}
As suggested by the proof sketch in Section \ref{subsec:lb-sketch}, we follow the exact same steps as the proof of Theorem \ref{thm:dra-dp-lowerbound} in Section \ref{prf:dra-dp-lowerbound-proof} and obtain the following:
\begin{align*}
    \cV(\cM, n) &\geq \tfrac{\diam(\cZ)^2}{8} (1 - \TV(\cM(\advData \cup \{z_1\}), \cM(\advData \cup \{ z_2\})))^n 
\end{align*}
Substituting the TV upper bound obtained in Lemma \ref{lem:approx-rdp-tv}, we obtain the following:
\begin{align*}
    \cV(\cM, n) &\geq \tfrac{\diam(\cZ)^2}{8} \left[\frac{1}{e^{\gamma \epsilon} + 1} + \frac{1}{\left(e^{\gamma \epsilon} + 1\right)^{\nicefrac{1}{\gamma}}}\right]^n
\end{align*}
where $\gamma = 1-\nicefrac{1}{\alpha}$. Following the same arguments as Theorem \ref{thm:dra-dp-lowerbound}, the sample complexity is obtained by upper bounding $\cV(\cM, n)$ with $\beta^2$ and rearranging the expression to get a lower bound on $n$. 
\subsection{Reconstruction Upper Bound : Proof of Theorem \ref{thm:dra-dp-upperbound}}
\label{prf:dra-dp-upperbound-proof}
Since $\cZ = \{ z_1, z_2 \}$, $\diam(\cZ) = \rho(z_1, z_2)$. Let $z\in\cZ$ be the true sample, i.e., $\cD = \advData \cup \{ z \}$. Our construction of $\cM$ is a variant of the randomized response mechanism, defined as follows:
\begin{enumerate}
\item Flip a coin $C \sim \mathsf{Bernoulli}(\delta)$. If $C = 1$, then $\cM(\cD) := (1,z)$
\item Otherwise, if $C=0$, then $\cM(\cD)$ is defined as follows:
\begin{align*}
        \cM(\cD) :=
        \begin{cases}
    (0,z) \text{ with probability } \tfrac{e^\epsilon}{1+e^{\epsilon}}\\
    (0,\cZ\setminus\{z\}) \text{ with probability } \tfrac{1}{1+e^{\epsilon}}
    \end{cases}
\end{align*}
\end{enumerate}
\paragraph{Claim : $\cM$ is $(\epsilon, \delta)$ DP}Consider any $x, y \in \cZ$. Let $\cD_x = \advData \cup \{x\}$ and $\cD_y = \advData \cup \{y\}$. Let $C_x$ refer to the $\mathsf{Bernoulli}(\delta)$ coin flip in step 1 of $\cM(\cD_x)$ and $C_y$ denote the same for $\cM(\cD_y)$. Now, denote the events $E_x = \{ C_x = 0\}$ and $E_y = \{ C_y = 0\}$. Note that, $\bP[E_x] = \bP[E_y] = 1 - \delta$. Furthermore, when conditioned on $E_x$, $\cM(\cD_x)|_{E_x}$ is the randomized response mechanism and the same holds for $\cM(\cD_y)|_{E_y}$ conditioned on $E_y$. In particular, the following holds true for any $S \subseteq \{0,1 \} \times \cZ$:
\begin{align}
\label{eq:randomized-response-conditonal}    \bP[\cM(\cD_x) \in S | E_x] &\leq e^{\epsilon} \bP[\cM(\cD_y) \in S | E_y] \nonumber \\
 \bP[\cM(\cD_y) \in S | E_y] &\leq e^{\epsilon} \bP[\cM(\cD_x) \in S | E_x] 
\end{align}
Using the fact that for any two events $A, B$, $\bP[A|B]\bP[B] \leq \bP[A] = \bP[A|B]\bP[B] + \bP[A|B^c]\bP[B^c] \leq \bP[A|B]\bP[B] + \bP[B^c]$, we conclude the following
\begin{align*}
    \bP[\cM(\cD_x) \in S] &\leq \bP[\cM(\cD_x) \in S | E_x] \bP[E_x] + \bP[E^c_x] \\
    &= \bP[\cM(\cD_x) \in S | E_x] (1 - \delta) + \delta \\
    &\leq e^{\epsilon} \bP[\cM(\cD_y) \in S | E_y] (1 - \delta) + \delta \\
    &= e^{\epsilon} \bP[\cM(\cD_y) \in S | E_y] \bP[E_y] + \delta \\
    &\leq e^{\epsilon} \bP[\cM(\cD_y) \in S]  + \delta
\end{align*}
Repeating the same argument with $\cM(\cD_y)$, we infer that $\bP[\cM(\cD_y) \in S] \leq e^{\epsilon} \bP[\cM(\cD_x) \in S]  + \delta$. Hence, $\cM$ is $(\epsilon, \delta)$ differentially private. 
\paragraph{Reconstruction Adversary} Given $n$ i.i.d samples $h_i = (C_i, z_i) \iidsim \cM(\cD), \ i \in [n]$, the reconstruction adversary is defined as follows. Firstly, if there exists any $i \in [n]$ such that $C_i = 1$, $\cA(h_{1:n}) = z_i$. Under this event, our construction of $\cM$ ensures that the adversary does not incur any reconstruction error, i.e. $\cA(h_{1:n}) = z$. On the contrary, if $C_i = 0 \ \forall i \in [n]$, the adversary $\cA$ uses the following randomized strategy: It first constructs $n$ random bits $b_i$ defined as: 
\begin{align*}
        b_i :=
        \begin{cases}
            \sim \mathsf{Bern}(1-e^{-\e}) \text{ if } h_i = (0,z) \\
    0 \text{         if } h_i = (0,\cZ\setminus\{z\})\\
\end{cases}
\end{align*}
It then uses the random bits $b_i$ to output a reconstruction as follows:
\begin{align*}
        \cA :=
        \begin{cases}
    z \text{ if } \wedge_{i=1}^{n} b_i = 1\\
    \cZ\setminus\{z\} \text{ otherwise }\\
\end{cases}
\end{align*}
where $\wedge$ denotes the binary OR operator. In summary, conditioned on the event $\mathcal{C} = \{ C_i = 0 \ \forall \ i \in [n] \}$, the adversary returns the true data point $z$ if there exists any $i \in [n]$ such that $b_i = 1$, and returns $\cZ \setminus \{z\}$ otherwise. On the contrary, when conditioned on the event $\mathcal{C}^c = \{ \exists \ i \in [n] \ s.t. \ C_i = 1  \}$, the adversary outputs the true data point $z$ almost surely (by construction of $\cM$)\\

\noindent Note that, when conditioned on $\mathcal{C}$, $b_i \iidsim \mathsf{Bernoulli}(1-p)$ where $p = \tfrac{1}{1+e^{\epsilon}} + e^{-\epsilon} \cdot \tfrac{e^{\epsilon}}{1 + e^{\epsilon}} = \tfrac{2}{(1+e^{\epsilon})}$. Furthermore, by definition of $\cM$, $\bP[\mathcal{C}] = (1 - \delta)^n$. Now, define the event $\mathcal{B}$ as $\mathcal{B} = \{ b_i = 0 \ \forall \ i \in [n] \}$.  Clearly, $\bP[\mathcal{B} | \mathcal{C}] = p^n$. It follows that
\begin{align}
\label{eq:thm10-recon}
    \bE_{h_{1:n} \iidsim \cM(D)}[\rho(\cA(h_{1:n}), z)^2] &= \bE_{h_{1:n} \iidsim \cM(D)}[\rho(\cA(h_{1:n}), z)^2 | \mathcal{C}] \bP[\mathcal{C}] + \bE_{h_{1:n} \iidsim \cM(D)}[\rho(\cA(h_{1:n}), z)^2 | \mathcal{C}^c] \bP[\mathcal{C}^c] \nonumber \\
    &= (1-\delta)^n \bE_{h_{1:n} \iidsim \cM(D)}[\rho(\cA(h_{1:n}), z)^2 | \mathcal{C}] \nonumber \\
    &= (1-\delta)^n \bE_{h_{1:n} \iidsim \cM(D)}[\rho(\cA(h_{1:n}), z)^2 | \mathcal{B}, \mathcal{C}] \bP[\mathcal{B} | \mathcal{C}] \nonumber \\
    &+ (1-\delta)^n \bE_{h_{1:n} \iidsim \cM(D)}[\rho(\cA(h_{1:n}), z)^2 | \mathcal{B}^c, \mathcal{C}] \bP[\mathcal{B}^c | \mathcal{C}] \nonumber  \\
    &= (1 - \delta)^n \bE_{h_{1:n} \iidsim \cM(D)}[\rho(\cA(h_{1:n}), z)^2 | \mathcal{B}, \mathcal{C}] \bP[\mathcal{B} | \mathcal{C}] \nonumber \\
    &= (1-\delta)^n p^n \mathsf{diam}(\cZ)^2 \nonumber \\
    &= \left(\frac{2(1-\delta)}{e^{\epsilon} + 1}\right)^n \mathsf{diam}(\cZ)^2
\end{align}
where the second equality uses the fact that $\bP[\cA(h_{1:n}) = z | \mathcal{C}^c] = 1$, the fourth equality uses the fact that $\bP[\cA(h_{1:n}) = z | \mathcal{B}^c, \cC] = 1$ and the fifth equality uses the fact that $\bP[\cA(h_{1:n}) = \cZ \setminus \{z\} | \mathcal{B}, \cC] = 1$. \\

\noindent To make the RHS of Equation \eqref{eq:thm10-recon} equal to $\beta^2$, it suffices to set $n = \Theta\left(\tfrac{\ln(\nicefrac{\diam(\cZ)^2}{\beta^2})}{\ln\left(\tfrac{e^\epsilon + 1}{2(1-\delta)}\right)}\right)$. 

\paragraph{Validity of the Adversary $\cA$} We note that $\cA$ is a valid adversary in the set of all data reconstruction adversaries. In particular, it satisfies the desiderata put forward in Section \ref{sec:problem-formulation}, and its construction is consistent with the statement of Theorem \ref{thm:dra-dp-upperbound}, i.e., for every $z \in \cZ$ and $\beta \geq 0$ there exists a private learner $\cM$ and adversary $\cA$ that needs $\Theta\left(\tfrac{\ln(\nicefrac{\diam(\cZ)^2}{\beta^2})}{\ln\left(\tfrac{e^\epsilon + 1}{2(1-\delta)}\right)}\right)$ queries to $\cM$ for achieving a reconstruction error of $\beta^2$. In particular, $\cA$ takes $n$ i.i.d samples $h_{1:n} \sim \cM(\cD)$ and outputs a reconstruction $\cA(h_{1:n}) \in \cZ$ as per a randomized decision rule, which is completely in agreement with our modelling assumptions in Section \ref{sec:problem-formulation} (note that our threat model is not restricted to deterministic adversaries) and of prior work (\citet{Guo2022,Balle2022}). Finally, the fact that $\cA$ has prior knowledge of the values of the privacy parameters $(\epsilon, \delta)$ also does not lead to any inconsistency since the learner's privacy parameters $(\e,\de)$ are almost always publicly released (see Definition 1 of \citet{Dwork2011}).\\

\noindent Next, we construct a deterministic majority-based adversary and establish tightness of the same.

\paragraph{Tightness of Majority Adversary} Operating under the same assumptions and parameter settings as above, let $(\cZ, \rho)$ denote the two-point metric space, i.e., let $\cZ = \{z_1, z_2\}$, and let $z \in \cZ$, $\beta \geq 0$ be arbitrary. Moreover, let $\cM$ be as defined in the proof of Theorem \ref{thm:dra-dp-upperbound} and let $h_i = (C_i, z_i) \iidsim \cM(\cD), \ i \in [n]$. \\ 

\noindent We construct the \textit{majority adversary} $\tcA$ defined as follows: 
\begin{equation*} 
\begin{aligned}[c]
        \tcA(h_{1:n}) :=
        \begin{cases}
    z_i \text{ if } C_i = 1 \text{ for any }i\in[n]\\
    z \text{ if } \sum_{i\in[n]}\mathbb{I}\{h_i=(0,z)\}>\nicefrac{n}{2}\\
    \cZ\setminus\{z\} \text{ otherwise } 
\end{cases}
\end{aligned}
\end{equation*}
The following result establishes the optimality of this adversary for $\epsilon \geq 0.042$.

\begin{theorem*}[\textbf{Analysis of Majority Adversary}]
Let $(\cZ, \rho)$ denote the two-point metric space, i.e., let $\cZ = \{z_1, z_2\}$, and let $z \in \cZ$, $\beta \geq 0$ be arbitrary. Moreover, let $\cM$ be as defined in the proof of Theorem \ref{thm:dra-dp-upperbound} and $\tcA$ be as defined above. Then, for any $\epsilon \geq 0.042$ and $\delta \in [0,1]$, $\tcA$ achieves an expected squared reconstruction error of at most $\beta^2$ by making $\Theta\left(\tfrac{\ln(\nicefrac{\diam(\cZ)^2}{\beta^2})}{\ln\left(\tfrac{e^\epsilon + 1}{2(1-\delta)}\right)}\right)$ queries to $\cM$.
\end{theorem*}
\begin{proof}
Let $h_i = (C_i, z_i) \iidsim \cM(\cD), \ i \in [n]$. Let $\cC$ denote the event $\cC = \{ C_i = 0 \ \forall \ i \in [n] \}$. By definition of $\cM$, $\bP[\cC] = (1 - \delta)^n$ and $\bP[\tcA(h_{1:n}) = z | \cC^c] = 1$. It follows that:
\begin{align*}
    \bP[\tcA(h_{1:n}) = \cZ \setminus \{z\}] &= \bP[\tcA(h_{1:n}) = \cZ \setminus \{z\} | \cC] \bP[\cC] + \bP[\tcA(h_{1:n}) = \cZ \setminus \{z\} | \cC^c] \bP[\cC^c] \\
    &= (1-\delta)^n \ \bP[ \sum_{i\in[n]}\mathbb{I}\{h_i=(0,z)\}\leq\nicefrac{n}{2} | \cC]
\end{align*}
Note that, conditioned on $\cC$, $\mathbb{I}\{h_i=(0,z)\} \iidsim \mathsf{Bernoulli}(q)$ where $q = \tfrac{e^\epsilon}{e^\epsilon + 1} \geq \nicefrac{1}{2}$. Define $X = \sum_{i=1}^{n} \mathbb{I}\{h_i=(0,z)\}$. Clearly, conditioned on $\cC$, $X \sim \mathsf{Binomial}(n, q)$. Since $nq \geq \nicefrac{n}{2}$, we control $\bP[X \leq \nicefrac{n}{2}]$ via a Chernoff Bound \citep{WikiBinomail} to obtain:
\begin{align*}
    \bP[ X \leq \nicefrac{n}{2} | \cC] &\leq \exp\left(-\tfrac{n}{2} \cdot d_{\mathsf{KL}}(\nicefrac{1}{2}, q)\right) \\
    &= \exp(\tfrac{n}{2} \ln(4q(1-q))) \\
    &= [4q(1-q)]^{\nicefrac{n}{2}} \\
    &= (\tfrac{2}{e^{\epsilon} + 1})^n e^{\nicefrac{n\epsilon}{2}}
\end{align*}
where $d_{\mathsf{KL}}(\nicefrac{1}{2}, q) = \mathsf{KL}[\mathsf{Bernoulli}(\nicefrac{1}{2})||\mathsf{Bernoulli}(q)] = \tfrac{1}{2} \ln(\tfrac{1}{2q}) + \tfrac{1}{2} \ln(\tfrac{1}{2(1-q)}) = - \tfrac{1}{2}\ln(4q(1-q))$. It follows that,
\begin{align*}
    \bP[\tcA(h_{1:n}) = \cZ \setminus \{z\}] &\leq \left(\frac{2(1-\delta)}{e^{\epsilon} + 1}\right)^n e^{\nicefrac{n\epsilon}{2}}
\end{align*}
Thus, we conclude
\begin{align*}
    \bE_{h_{1:n} \iidsim \cM(D)}[\rho(\tcA(h_{1:n}), z)^2] &= \bP[\cA(h_{1:n}) = \cZ \setminus \{z\}] \mathsf{diam}(\cZ)^2 \\
    &\leq \left(\frac{2(1-\delta)}{e^{\epsilon} + 1}\right)^n e^{\nicefrac{n\epsilon}{2}} \mathsf{diam}(\cZ)^2
\end{align*}
To set the RHS of the above inequality equal to $\beta^2$, it suffices to set $n = \Theta\left(\frac{\ln(\nicefrac{\mathsf{diam}(\cZ)^2}{\beta^2})}{\ln(\frac{1+e^\e}{2(1-\de)})-\frac{\e}{2}}\right)$. \\

\noindent Now, consider $f : \bR \to \bR$ defined as $f(t) = 0.99 \ln(\tfrac{e^t + 1}{2}) - \nicefrac{t}{2}$. Note that $f(0.042) \geq 0$. Furthermore, $f^\prime(t) = 0.99 \tfrac{e^t}{1 + e^t} - 0.5 > 0 \ \forall t \geq 0.042$. Thus, $f(t) \geq 0 \ \forall t \ \geq 0.042$. Rearranging, we conclude that $\ln(\tfrac{e^t + 1}{2}) - \nicefrac{t}{2} \geq 0.01 \ln(\tfrac{e^t + 1}{2})$. \\ 

\noindent Hence, for any $\epsilon \geq 0.042$, $\ln(\tfrac{e^\epsilon + 1}{2}) - \nicefrac{\epsilon}{2} \geq 0.01 \ln(\tfrac{e^\epsilon + 1}{2})$ and for any $\delta \in [0, 1]$, $\ln(\tfrac{1}{1 - \de}) \geq 0$ and thus $\ln(\tfrac{1}{1 - \de}) \geq 0.01 \ln(\tfrac{1}{1-\delta})$. It follows that for any $\epsilon \geq 0.042$ and $\delta \in [0,1]$, 
\begin{align*}
    \ln(\tfrac{e^{\epsilon} + 1}{2(1-\delta)}) - \nicefrac{\epsilon}{2} \geq 0.01 \ln(\tfrac{e^{\epsilon} + 1}{2(1-\delta)}) \\
\end{align*}
Moreover, since $\ln(\tfrac{e^{\epsilon} + 1}{2(1-\delta)}) - \nicefrac{\epsilon}{2} \leq \ln(\tfrac{e^{\epsilon} + 1}{2(1-\delta)})$, we conclude the following
\begin{align*}
    \frac{1}{ \ln(\tfrac{e^{\epsilon} + 1}{2(1-\delta)})} \leq \frac{1}{\ln(\tfrac{e^{\epsilon} + 1}{2(1-\delta)}) - \nicefrac{\epsilon}{2}} \leq \frac{100}{ \ln(\tfrac{e^{\epsilon} + 1}{2(1-\delta)})} \ \forall \ \epsilon \geq 0.042, \delta \in [0, 1]
\end{align*}
Thus, for any $\epsilon \geq 0.042$ and $\delta \in [0,1]$, it suffices to set $n = \Theta\left(\tfrac{\ln(\nicefrac{\diam(\cZ)^2}{\beta^2})}{\ln\left(\tfrac{e^\epsilon + 1}{2(1-\delta)}\right)}\right)$ in order to obtain a squared reconstruction error of at most $\beta^2$,
\end{proof}

\subsection{Comparison with \cite{Guo2022}}
\label{app-subsec:guo-comparison}
For convenience, we restate the main result of \cite{Guo2022}
\begin{theorem}
[Theorem 1 of \cite{Guo2022}]
\label{app-thm:Guo-et-al}
    Let $(\cZ,\|\cdot\|_2)$ be the input metric space, with $\cZ\subseteq\reals^d$, of a $(2,\e)$-R\'enyi DP learner $\cM:\reals^d\to\cH$. Let $\cA$ be a reconstruction attack that outputs $\hat{z}(h)$ upon observing one sample from the learner's output distribution , i.e., $h\leftarrow\cM(\cD)$. Then, if $\zhat$ is unbiased,  
    \begin{equation}
        \bE[\|\hat{z(h)}-z\|^{2}_{2}] \geq \frac{\sum_{i=1}^{d}\mathsf{diam}_{i}(\cZ)^{2}}{4(e^{\e}-1)}
    \end{equation}
    where $\mathsf{diam}_{i}(\cZ)$ denotes the i-th coordinate-wise diameter defined as $ \mathsf{diam}_{i}(\cZ):=\underset{z,z'\in\cZ, z_{j} = z'_{j}\forall j\neq i}\sup|z_{i}-z_{i}'|$, and $C$ is a universal constant.
\end{theorem}
We shall now show that the above result is invalid for a large range of $\epsilon$. In particular, \textbf{Theorem 1 of \cite{Guo2022} yields invalid lower bounds for any $\epsilon < \ln(1 + \nicefrac{d}{4})$} \\

\noindent Consider a canonical instance of the data reconstruction problem where the data domain $\cZ$ is the unit ball in $(\reals^{d}, \|.\|_2)$. Then, $\sum_{i = 1}^{d} \mathsf{diam}_i(\mathcal{Z})^2 = 4d $ whereas $\mathsf{diam}(\mathcal{Z}) = 2$. Let $z \in \mathcal{Z}$ be arbitrary. As per our problem setup in Section \ref{sec:problem-formulation}, $\mathcal{M}(\mathcal{D}_{-}\cup\{ z\})$ denotes the output distribution induced by the randomized learner, when the target sample is $z$. By definition of the diameter, any data reconstruction attack $\hat{z}$ satisfies the trivial upper bound 

\begin{equation}\label{eq:expctn-upper}
    \bE_{h \sim \mathcal{M}(\mathcal{D}_{-}\cup\{ z\})}\left[\| \hat{z}(h) - z \|^2_{2}\right] \leq \mathsf{diam}(\mathcal{Z})^2 = 4
\end{equation}

\noindent However, the lower bound in Theorem 1 of \cite{Guo2022} states that, for any data reconstruction attack $\hat{z}$ on a $(2, \epsilon)$ Renyi Differentially Private learner, the reconstruction MSE is lower bounded as

$$\bE_{h \sim \mathcal{M}(\mathcal{D}_{-}\cup\{ z\})}\left[\| \hat{z}(h) - z \|^2_{2}\right] \geq \frac{\sum_{i = 1}^{d} \mathsf{diam}_{i}(\mathcal{Z})^2}{4(e^{\epsilon} - 1)} \geq \frac{d}{e^\epsilon - 1}$$
The RHS in the above bound is strictly greater than $\diam(\cZ)^2 = 4$ for any $\epsilon < \ln(1+\nicefrac{d}{4})$, thereby contradicting the trivial upper bound established above.

\section{Analysis of Data Reconstruction for $\epsilon$ Metric DP}
\label{app-sec:dra-mdp-proofs}
\subsection{Reconstruction Lower Bound : Proof of Theorem \ref{thm:dra-mdp-lowerbound}}
\label{prf:dra-mdp-lowerbound-proofs}
We borrow the notation of $P_z$ and $P^n_z$ from the proof of Theorem \ref{thm:dra-dp-lowerbound} in Appendix \ref{prf:dra-dp-lowerbound-proof}. Let $Z := \{z_1, \dots, z_M\} \subseteq \cZ$ be a finite subset of $\cZ$ (to be specified later) with $M > 1$. Following the same steps as the proof of Theorem \ref{thm:dra-dp-informal} by applying Markov's inequality, we obtain:

\begin{align}
\label{eq:fano-bound-ineq}
\sup_{z\in\cZ}\bE_{h_1, \dots, h_n \iidsim \cM(\advData \cup \{z\})}[\rho(\cA(h_{1:n}), z)^2] &\geq \Delta^2 \sup_{z \in \cZ} P^{n}_z[\rho(\cA(h_{1:n}), z) \geq \Delta)] \nonumber \\
    &\geq \Delta^2 \sup_{j \in [M]}P^{n}_{z_j}[\rho(\cA(h_{1:n}), z_j) \geq \Delta)]
\end{align}
where the last inequality follows from the fact that $Z \subseteq \cZ$. For a tight lower bound, we must carefully construct the set $Z$. We recall from Section \ref{sec:proof-sketch} that the adversary's task of reconstructing a training sample $z\in\cZ$ is at least as hard deciding which among the $\{z_1,\dots z_M\}$ was included in the training set, i.e., reconstruction is at least as hard as an $M$-ary hypothesis testing problem. To reduce from reconstruction to testing, we construct a sample space $\{z_1,\dots, z_M\}$ such that the elements are uniformly distinguishable i.e., $\rho(z_i,z_j)\geq 2\Delta \ \forall i\neq j\in[M]$, i.e., $Z$ is a $2\Delta$ packing in the $\rho$ metric. For ease of exposition, we assume $\rho$ is a homogeneous metric. 

\noindent Given this construction, we again follow the same steps as the proof of Theorem \ref{thm:dra-dp-lowerbound}. In particular, suppose there exists $z_j, z_k$ such that $\rho(\cA(h_{1:n}),z_j)\geq\rho(\cA(h_{1:n}),z_k)$ were true, then by an application of triangle inequality, one would have:
        \begin{align*}
            \rho(\cA(h_{1:n}),z_j)&\geq\rho(\cA(h_{1:n}),z_k)\\
            &\geq\rho(z_k,z_j) - \rho(\cA(h_{1:n}),z_j)\\
            &\geq2\Delta - \rho(\cA(h_{1:n}),z_j)
        \end{align*}
Thus, $\rho(\cA(h_{1:n}),z_j)\geq\rho(\cA(h_{1:n}),z_k) \implies \rho(\cA(h_{1:n}),z_j)\geq\Delta$. Defining the minimum distance test $\Psi$ as $\Psi := \underset{j\in [M]}{\operatorname{argmin}}\rho(\cA(h_{1:n}), z_j)$, we conclude that
\begin{align*}
        P^{n}_{z_j}[\rho(\cA(h_{1:n}), z_j)\geq \Delta)]&\geq P^{n}_{z_j}[\Psi(h_{1:n}))\neq z_j]
\end{align*}
It follows that
\begin{align}
\label{eq:fano-mid-step}
\sup_{z\in\cZ}\bE_{h_1, \dots, h_n \iidsim \cM(\advData \cup \{z\})}[\rho(\cA(h_{1:n}), z)^2] &\geq \Delta^2 \sup_{z \in \cZ} P^{n}_z[\rho(\cA(h_{1:n}), z) \geq \Delta)] \nonumber \\
    &\geq \Delta^2 \sup_{j \in [M]}P^{n}_{z_j}[\Psi(h_{1:n}))\neq z_j] \nonumber \\
    &\geq \Delta^2\left[1-\frac{1/M^{2}\sum_{j,k=1}^{M}\mathsf{KL}(P_{z_j}^n,P_{z_k}^n) + \log 2}{\log(M-1)}\right] \nonumber \\
    &= \Delta^2\left[1-\frac{\nicefrac{n}{M^{2}}\sum_{j,k=1}^{M}\mathsf{KL}(P_{z_j},P_{z_k}) + \log 2}{\log(M-1)}\right]
\end{align}
where the second inequality applies Lemma \ref{lemma:fanos} and the last inequality uses the tensorization of KL divergence for product measures.  \\

\noindent Now, let $S$ be maximal $\nicefrac{1}{2}$-packing of the unit ball $\mathbb{B}(\cZ)$. Note that $2 \geq |S| < \infty$ due to the local compactness of $\cZ$ \citep{rudin1976principles}. Since $\cZ$ is a linear homogeneous metric space, we can scale $S$ by a factor of $4 \Delta$ (assuming $\Delta \leq \nicefrac{1}{4}$ without loss of generality) to obtain a maximal $2 \Delta$ packing of $\mathbb{B}(\cZ)$. Let this scaling of $S$ be the set $Z$. Note that $M = |Z| = |S|$ and by maximality of the packing $S$,  $Z$ is also a $2 \Delta$ covering of $\mathbb{B}(\cZ)$. Thus, $\rho(z_j, z_k) \leq 2 \Delta$ and thus, by Lemma \ref{lem:kl-upper-bounds}, the following holds:
\begin{align*}
    \mathsf{KL}(P_{z_j} || P_{z_k}) = \mathsf{KL}(\cM(\advData \cup \{ z_j \}) || \cM(\advData \cup \{ z_k \})) \leq \frac{\epsilon^2 \rho(z_j, z_k)^2}{2} \leq 2 \epsilon^2 \Delta^2
\end{align*}

\begin{align*}
    \bE_{h_1, \dots, h_n \iidsim \cM(\advData \cup \{z\})}[\rho(\cA(h_{1:n}), z)^2] &\gtrsim \Delta^2\left[1-\frac{n\e^{2}\Delta^{2} + \log 2}{\ln M(\nicefrac{1}{2}, \mathbb{B}(\cZ), \rho)}\right]
\end{align*}
Optimizing over $\Delta$ and taking an infimum over all adversaries $\cA$ gives us $\cV(\cM, n) \gtrsim \tfrac{\Tilde{d}}{n\epsilon^2}$ where $\Tilde{d} = \ln M(\nicefrac{1}{2}, \mathbb{B}(\cZ), \rho)$. As before, the required query complexity is obtained by upper bounding $\cV(\cM, n)$ with $\beta^2$ and rearranging the inequality to obtain a lower bound on $n$.  
\subsection{Reconstruction Upper Bound : Proof of Theorem \ref{thm:dra-mdp-upperbound}}
\label{prf:dra-mdp-upperbound-proofs}
Our upper bound construction is based upon the exponential mechanism in $(\bR^d, \|\dot\|_2)$. In particular, for $\mu \in \bR^d$, let $\pi_{\mu}$ be a probability measure on $\bR^d$ whose density with respect to the Lebesgue measure $\mathsf{Leb}$ is given by,
\begin{align*}
    \frac{\mathsf{d} \pi_{\mu, \epsilon}}{\mathsf{d} \mathsf{Leb}}(x) \propto e^{-\epsilon \|x - \mu\|_2}
\end{align*}
For any $z \in \bR^d$, we set the output distribution of $\cM$ to be $\cM(\cD) = \cM(\advData \cup \{z\}) = \pi_{z, \epsilon}$. Note that for any $\mu_1, \mu_2 \in \bR^d$, their ratio of densities is bounded as follows due to the triangle inequality
\begin{align*}
    \frac{\mathsf{d} \pi_{\mu_1, \epsilon}}{\mathsf{d} \pi_{\mu_2, \epsilon}}(x) = e^{\epsilon(\|x - \mu_2\|_2 - \|x - \mu_1 \|_2)} \leq e^{\epsilon \|\mu_1 - \mu_2\|_2}
\end{align*}
Note that the same upper bound holds for $\frac{\mathsf{d} \pi_{\mu_2, \epsilon}}{\mathsf{d} \pi_{\mu_1, \epsilon}}$ It follows that for any Borel set $A \subseteq \bR^d$, $$e^{-\epsilon \|\mu_1 - \mu_2\|_2}\pi_{\mu_2, \epsilon}(A) \leq \pi_{\mu_1, \epsilon}(A) \leq e^{\epsilon \|\mu_1 - \mu_2\|_2} \pi_{\mu_2, \epsilon}(A)$$
Consequently, $\cM(\cD)$ is $\epsilon$ Lipchitz DP w.r.t $\|\cdot\|_2$. \\

We consider a reconstruction adversary which draws $h_1, \dots, h_n \iidsim \cM(\cD)$ and computes $\cA(h_1, \dots, h_n) = \tfrac{1}{n} \sum_{i=1}^{n} h_i$. It is easy to see that $\bE_{h_1, \dots, h_n \iidsim \cM(\cD)}[\|\cA(h_1, \dots, h_n) - z\|^{2}_{2}] = \frac{\sigma^2}{n}$ where $\sigma^2 = \bE_{x \sim \pi_{z, \epsilon}}[\|x - z\|^{2}_{2}]$.

To sharply bound $\sigma^2$, we first note that by Theorem 1 of \cite{bobkov2003spectral} shows that for any $\mu \in \bR^d$, the Poincare constant (or the inverse spectral gap) of $\pi_{\mu, \epsilon}$ is $\lambda = \Theta(\nicefrac{d}{\epsilon^2})$. We now bound $\sigma^2$ using the fact that for any distribution, an inverse spectral gap of $\lambda$ implies subexponential concentration with parameter $\Theta(\sqrt{\lambda})$ \citep{bobkov1997poincare, aida1994moment}. In particular, Theorem 2.7 of \cite{huang2021poincare} ensures that $\sigma^2 \leq \lambda \log^2(d) = \tfrac{d \log^2(d)}{\epsilon^2}$. It follows that,
\begin{align*}
    \bE_{h_1, \dots, h_n \iidsim \cM(\cD)}[\|\cA(h_1, \dots, h_n) - z\|^{2}_{2}] \lesssim \frac{d \log^2(d)}{n \epsilon^2}
\end{align*}
To make the RHS at most $\beta^2$, it suffices to set $n = \Theta(\tfrac{d \log^2(d)}{\epsilon^2 \beta^2})$

\section{Metric Differential Privacy in Practice}
\label{app-sec:mdp-in-practice}
We first note that mDP is a strict generalization of DP.
\begin{lemma}[Metric DP is a generalization of DP]
\label{lem:mdp-v-dp}
Let $\cZ$ be an arbitrary collection of secrets. A randomized learner $\cM:\cZ\to\cH$ is $\e$-differentially private if and only if $\cM$ is $\e$-metric differentially private with respect to the Hamming metric on $\cZ^N$. 
\end{lemma}
\begin{proof}
    See Lemma 2.3 of \cite{boedihardjo2022private}
\end{proof}

Next, we establish the requisite framework for analyzing mDP in practice and present an extension of the Gaussian Noise Mechanism to metric privacy. We emphasize that, since mDP is a generalization of DP, practical applications would require a relaxation similar to \ref{defn:rdp} in order to establish privacy guarantees. To this end, we consider the following relaxed version of metric differential privacy. 
\begin{definition}[R\'enyi mDP]
\label{def:renyi-lipschitz}
Given a metric space \((\cZ, \rho)\), for any \(\epsilon\geq 0, \alpha\in[1,\infty]\), a randomized learner \(\cM: \cZ^n \to\cH\) is \((\alpha, \epsilon)-\)R\'enyi mDP if for every pair of input datasets \(\cD, \cD'\in\cZ\) the R\'enyi Divergence is bounded as follows
    \begin{equation}
        D_{\alpha}(P||Q) = \frac{1}{\alpha - 1}\log\bE_{x\sim Q}\left[\left(\frac{P(x)}{Q(x)}\right)^{\alpha} \right]\leq\epsilon\rho(\cD,\cD')
    \end{equation}
    where \(P\) and \(Q\) denote the output distributions \(\cM(\cD)\) and \(\cM(\cD')\), respectively. 
\end{definition}
\noindent It is easy to see that Definition \ref{def:renyi-lipschitz} recovers R\'enyi DP when restricted to the Hamming metric, in a manner similar to how mDP generalizes pure DP. Before proceeding further, we recall standard notions of sensitivity and noise mechanisms for classical differential privacy.

\noindent A common paradigm in releasing statistical information with differential privacy is to generate a noisy estimate of the true statistic. Namely, if $f: \cZ\to\reals^{d}$ is a real-valued function\footnote{The restriction to real-valued functions is not essential.}, $\cM(\cD) = f(\cD) + \eta$ is differentially private for an appropriate noise level $\eta$. The magnitude of perturbation is generally calibrated according to the sensitivity of the function $f$, defined as follows:
\begin{definition}[Sensitivity]
\label{defn:global-sensitivity}
    Given any function \(f:\cZ\to\reals^{d}\),
 \begin{equation}
     \Delta_{f} = \underset{\cD,\cD'\in\cZ ; \|\cD-\cD'\|\leq 1}{\max}\|f(\cD)-f(\cD')\|
 \end{equation}
\end{definition}
The Gaussian mechanism, for instance, requires that $f(\cD)$ be perturbed with noise drawn from the Gaussian distribution, as follows:
\begin{definition}[Gaussian Mechanism]
\label{defn:dp-gaussian-mech}
    Given any function \(f:\cZ\to\reals^{d}\), the Gaussian mechanism is defined by
 \begin{equation}
     \cM(\cD) = f(\cD) + \mathcal{N}(0, \Delta_{f}^{2}\sigma^{2})
 \end{equation}
 For $\epsilon < 1$ and $c^{2} > 2\ln(1.25/\delta)$, the Gaussian mechanism with parameter $\sigma \geq c\Delta_{f}/\epsilon$ satisfies $(\epsilon, \delta)$-DP \citep{Dwork2017}.
\end{definition}

The sensitivity constraint described in Definition \ref{defn:global-sensitivity} is over all neighboring datasets \(\cD, \cD'\in\cZ\), in order to accommodate for classical differential privacy. For metric differential privacy, we define a relaxed requirement which we call input lipschitzness. 
\begin{definition}[Input Lipschitzness]
 A function \(f:\cZ\to\reals^{d}\) is $L_{input}$ input lipschitz if the following holds
    \begin{equation}
        \|f(\cD)-f(\cD')\|\leq L_{input}\rho(\cD,\cD')
    \end{equation}
\end{definition}
Equipped with this notion, we extend the Gaussian mechanism for DP (\ref{defn:dp-gaussian-mech}) to mDP, as follows:
\begin{proposition}[Gaussian Mechanism for mDP]
\label{prop:gaussian-lp}
    Consider any two datasets \(\cD, \cD' \in \cZ\), at a distance \(\rho(\cD,\cD')\) and an input lipschitz function \(f:\cZ\to\reals^{d}\). For any \(\delta \in (0,1)\) and \(c^{2}>\ln(1.25/\delta)\), the Gaussian Mechanism with parameter \(\sigma \geq cL_{input}/\epsilon\) is \((\epsilon,\delta)-\)mDP, where  \(L_{input}\) is the input Lipchitz constant of \(f\).
\end{proposition}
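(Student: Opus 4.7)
The plan is to reduce the $(\epsilon_L, \delta)$-mDP claim to a family of pairwise $(\epsilon_L \rho(\dataset, \dataset'), \delta)$-indistinguishability statements, one for each pair of datasets, and then invoke a tailored version of the classical Gaussian mechanism analysis to discharge each with the \emph{same} noise scale $\sigma \geq c L_{input}/\epsilon_L$.

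First I would fix an arbitrary pair $\dataset, \dataset' \in \dataspace$ and set $\Delta := \|f(\dataset) - f(\dataset')\|_2$. The input Lipschitz hypothesis gives $\Delta \leq L_{input}\,\rho(\dataset, \dataset')$. Writing $P = \mathcal{N}(f(\dataset), \sigma^2 I)$ and $Q = \mathcal{N}(f(\dataset'), \sigma^2 I)$ for the two output densities, the standard privacy-loss computation yields
\[
L(h) \;=\; \ln\frac{P(h)}{Q(h)} \;=\; \frac{\langle h - f(\dataset),\, f(\dataset') - f(\dataset)\rangle}{\sigma^2} + \frac{\Delta^2}{2\sigma^2}, \qquad h \sim P,
\]
which under $h \sim P$ is a univariate Gaussian with mean $\Delta^2/(2\sigma^2)$ and variance $\Delta^2/\sigma^2$.

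Next I would apply a Gaussian tail bound to $L$ with threshold $\epsilon_L \rho(\dataset, \dataset')$, aiming for $\Pr[L > \epsilon_L \rho(\dataset, \dataset')] \leq \delta$. This reduces to precisely the same inequality that appears in the Dwork–Roth analysis of the classical Gaussian mechanism, but with $\epsilon$ replaced by the distance-scaled quantity $\epsilon_L \rho(\dataset, \dataset')$ and the global sensitivity replaced by $\Delta$. A direct computation shows the tail bound holds whenever $\sigma \geq c\,\Delta / (\epsilon_L \rho(\dataset, \dataset'))$ with $c^2 > \ln(1.25/\delta)$. Substituting the Lipschitz bound $\Delta \leq L_{input}\rho(\dataset, \dataset')$ makes the $\rho(\dataset, \dataset')$ factors cancel, reducing the requirement to $\sigma \geq c L_{input}/\epsilon_L$ -- exactly the hypothesis of the proposition and, crucially, independent of the pair.

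Finally I would convert the tail bound into the desired mDP statement in the standard way: for any measurable $T \subseteq \labelspace$, splitting the integral defining $P(T)$ according to whether $L$ exceeds $\epsilon_L \rho(\dataset, \dataset')$ gives
\[
P(T) \;\leq\; e^{\epsilon_L \rho(\dataset, \dataset')}\,Q(T) + \Pr[L > \epsilon_L \rho(\dataset, \dataset')] \;\leq\; e^{\epsilon_L \rho(\dataset, \dataset')}\,Q(T) + \delta,
\]
which is precisely the $(\epsilon_L, \delta)$-mDP condition for this pair; since the pair was arbitrary, the result follows. The main obstacle will be the second step: verifying that the Dwork--Roth tail calculation remains balanced once \emph{both} the effective privacy parameter and the effective sensitivity scale linearly in $\rho(\dataset, \dataset')$, so that the $\rho(\dataset, \dataset')$ factors cancel cleanly and a single noise scale certifies every pair simultaneously. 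The remaining work is essentially bookkeeping.
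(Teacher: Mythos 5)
Your proposal is correct and follows essentially the same route as the paper: bound the pairwise sensitivity by $L_{input}\,\rho(\dataset,\dataset')$ via input Lipschitzness, set the effective privacy parameter to $\epsilon_L\,\rho(\dataset,\dataset')$, and observe that the $\rho(\dataset,\dataset')$ factors cancel in the classical Dwork--Roth Gaussian-mechanism tail calculation, yielding a single pair-independent noise scale. The paper simply defers the tail computation to the standard $(\epsilon,\delta)$-DP proof, whereas you spell it out; the content is the same.
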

\begin{proof}
From the definition of input Lipschitzness and sensitivity, it follows that 
\begin{equation}
    \Delta_{f} = \|f(\cD)-f(\cD')\|\leq L_{input}\rho(\cD,\cD')
\end{equation}
 Setting \(\epsilon = \epsilon\rho(\cD, \cD')\), we are required to bound the following quantity for \(f\) applied on datasets \(\cD, \cD'\):

 \begin{equation}
\left|\ln\frac{e^{\frac{-1}{2\sigma^{2}}\|x\|^{2}}}{e^{\frac{-1}{2\sigma^{2}}\|x + \Delta_{f}\|^{2}}}\right|\leq \epsilon = \epsilon\rho(\cD, \cD')
 \end{equation}
 where the numerator is the probability of observing a specific output \(f(\cD)+x\) and the denominator is the probability of observing the same output when the input dataset is replaced by \(\cD'\). The proof of our claim then follows trivially from the proof of Gaussian Mechanism for \((\epsilon, \delta)\) differential privacy \citep{Dwork2017}. 
\end{proof}

We are now ready to develop mDP accountants for privacy-preserving algorithms. A popular approach to differentially private deep learning involves an extension of Stochastic Gradient Descent (SGD)\citep{Abadi2016} to incorporate gradient clipping and noisy gradient updates. The resulting algorithm is called Differentially Private SGD (DP-SGD), wherein, $(\epsilon, \delta)$ differential privacy is guaranteed via an application of the Gaussian mechanism at each iteration. Since mDP is a generalization of traditional DP, we expect that iterative DP algorithms that rely on additive noise at each intermediate solution step are often inherently mDP, owing to the application of composition theorems that admit natural extensions to arbitrary metric spaces.

\subsection{Metric DP Accounting for DP-SGD}
In the analysis of DP-SGD \citep{Abadi2016}, privacy is attained via the addition of Gaussian distributed noise. Standard arguments on the composition of privacy mechanisms renders the differentially private variant of SGD \((q\epsilon, q\delta)-\)differentially private at each step, where \(q = L/N\) is the lot size. We note that, by replacing the global sensitivity assumption with input Lipschitzness, the Gaussian mechanism ensures \((\epsilon, \delta)\)-mDP for appropriately scaled noise, as in Proposition \ref{prop:gaussian-lp}. Thus, DP-SGD inherently incorporates metric differential privacy, with privacy accounting based on standard composition theorems.

\subsection{Privacy Analysis for Metric DP in PN-SGD}\label{appendix:propositionPNSGD}
When analysing the privacy of learning algorithms that require iterative updates on an intermediate solution, it is common practice to ensure privacy at each iteration and argue about the cumulative loss of privacy via composition theorems. Another popular direction is the theoretical analysis of Noisy Stochastic Gradient Descent to formalize privacy amplifications under certain assumptions and obtain bounds on the degradation of privacy across iterations. We extend the analysis of one such algorithm, the Projected Noisy Stochastic Gradient Descent (PNSGD) \citep{Feldman2018}, restated in Algorithm \ref{alg:pnsgd}, and establish that the algorithm satisfies $(\alpha, \epsilon)$-R\'enyi mDP with a lower noise magnitude than that required for $(\alpha, \epsilon)$-R\'enyi DP.
\begin{algorithm}[H]
   \caption{Projected Noisy Stochastic Gradient Descent (Algorithm 1 of \cite{Feldman2018})}
   \label{alg:pnsgd}
\begin{algorithmic}
   \STATE {\bfseries Input:} Dataset $S = \{x_{1}, ..., x_{n}\},  f: \mathcal{K}\times\mathcal{X}\rightarrow \reals$, which is a convex function in the first parameter, learning rate $\eta$, starting point $w_{0}\in\mathcal{K}$, noise parameter $\sigma$
   \FOR{$t\in\{0, ..., n-1\}$}
   \STATE  $v_{t+1}\leftarrow w_{t} - \eta(\nabla_{w}f(w_{t}, x_{t+1})+Z)$, where $Z\sim\mathcal{N}(0,\sigma^{2}\mathbb{I}_{d})$.
   \STATE  $w_{t+1}\leftarrow\prod_{\mathcal{K}}(v_{t+1})$, where $\prod_{\mathcal{K}}(w)=\arg\min_{\theta\in\mathcal{K}}\|\theta-w\|_{2}$ is the $l_{2}$ projection on $\mathcal{K}$
   \ENDFOR
   \STATE return the final iterate $w_{n}$ 
\end{algorithmic}
\end{algorithm}
\begin{proposition*}
\label{app:prop-pnsgd}
    Let $\cK \subset \mathbb{R}^d$ be a convex set and let $\{ f(., x) \}_{x \in cX}$ be a family of convex, $G$ Lipschitz, $\beta$-smooth functions over $\cK$, where the gradients are $L_{input}$-input Lipschitz. Furthermore, assume $\cX$ is a bounded set. Then, for any $\eta \leq 2/\beta$ and $\alpha > 1$, initializing $w_0 \in \cK$ and dataset $S \in \cX^{n}$, PNSGD run with $\sigma^{2} \geq \frac{\alpha G L_{input}}{\epsilon (n- t + 1)}$ satisfies $(\alpha, \epsilon)$-R\'enyi mDP.
\end{proposition*}
\begin{proof}
    The proof of this theorem is an extension of Theorem 23 of \cite{Feldman2018} to Renyi mDP. Let $S \coloneqq (x_1, \dots, x_n)$ and $S' \coloneqq (x_1, \dots, x_{t-1}, x^{'}_{t}, \dots, x_n)$ be two arbitrary datasets that differ in index $t$. Since $\eta \leq 2/\beta$, the projected SGD updates are contractive noisy iterations as per Definition 19 of \cite{Feldman2018}. 

We define the projected SGD operators for the dataset $S$ as $g_i(w) = \Pi_{\cK}(w) - \eta \nabla f(\Pi_{\cK}(w), x_i) , \ i \in [n]$. The projected SGD operators $g^{'}_i$ for the dataset $S'$ are defined in a similar fashion. Since the datsets differ only in the $t^{\textrm{th}}$ index, we note that $g_i = g^{'}_i \ \forall i \in [n] \setminus \{ t \}$ and,
\begin{align*}
    \sup_{w} \| g_{t}(w) - g^{'}_{t}(w) \|_{2} &= \eta \sup_{w} \| \nabla f(\Pi_{\cK}(w), x_t) - \nabla f(\Pi_{\cK}(w), x^{'}_t) \|_{2}  \\
    &= \eta \sup_{w} \sqrt{\| \nabla f(\Pi_{\cK}(w), x_t) - \nabla f(\Pi_{\cK}(w), x^{'}_t) \|_{2}} \sqrt{\| \nabla f(\Pi_{\cK}(w), x_t) - \nabla f(\Pi_{\cK}(w), x^{'}_t) \|_{2}} \\
    &\leq \eta \sqrt{2 G L_{input} \| x_t - x^{'}_t \|_{2}}
\end{align*}
Now, applying Theorem 22 of \cite{Feldman2018} with $a_1, \dots, a_{t-1} = 0$, $a_t, \dots, a_n = \frac{\eta \sqrt{2 G L_{input} \| x_t - x^{'}_t \|_{2}}}{n-t+1}$, $s_t = \eta \sqrt{2 G L_{input} \| x_t - x^{'}_t \|_{2}}$ and $s_i = 0, \ \forall i \in [n] \setminus \{ t \}$, we conclude,
\begin{align*}
    D_{\alpha}(w_n || w^{'}_n) \leq \frac{\alpha}{2 \eta^2 \sigma^2} \sum_{i=1}^{n} a^2_i \leq \frac{ \alpha G L_{input} \| x_t - x^{'}_t \|_{2}}{\sigma^2 (n-t+1)}
\end{align*} 
From the above inequality, we conclude that setting $\sigma^2 \geq \frac{\alpha G  L}{\epsilon (n- t + 1)}$ suffices to ensure $D_{\alpha}(w_n || w^{'}_n) \leq \epsilon \| x_t - x^{'}_t \|$.
\end{proof}
\section{Related Literature}
\label{sec:related-literature}
Theoretical investigation of reconstruction adversaries in the framework of differential private machine learning was initially studied in the recent works of \cite{Balle2022} and \cite{Guo2022}, in the bayesian and frequentist settings respectively. Their analyses, however, only considers a single round of communication between the adversary and the learner, thereby limiting the expressibility of the attack. \cite{Balle2022} introduced the notion of \emph{reconstruction robustness} to bound the success probability of an attack against $\e$DP learners, and, under certain modeling assumptions on the adversary's prior belief, derived upper bounds as a function of $\e$. This metric is somewhat restrictive, since it is only equivalent to DP when considering \emph{perfect} reconstruction, and requires a very specific class of priors, concentrated on pairs of data points(see Theorem 5 in \cite{Balle2022}). Instead, we focus on the minimax rate of reconstruction error of an adversary, described in detail in Section \ref{sec:problem-formulation}. This is more general than reconstruction robustness, and captures a privacy game between the learner and the adversary. Informally, the learner chooses a target sample that is the hardest to reconstruct, while the adversary picks the optimal attack that minimizes the error of reconstruction.

\noindent \cite{Guo2022} operate in the frequentist setting, and obtain lower bounds on the Euclidean MSE of reconstruction through the output of $(2,\e)$ R\'enyi DP learners. A key limitation of their work is that their lower bound result is invalid for a broad range of $\e$, as we formally state in Section \ref{sec:dra-dp-bounds}, with corresponding proof in Appendix \ref{app-subsec:guo-comparison}. Furthermore, prior work requires the input metric space to be compact and relies on the existence of coordinate-wise diameters. Such stringent assumptions hinder the generalizability of their results beyond $\reals^{d}$, particularly in unstructured domains such as text, where the notion of coordinate-wise diameters may not be well-defined or relevant.

\noindent \citet{stock2022defending} obtain lower bounds on the leakage of secrets in language tasks in R\'enyi differentially private learners. However, they are restricted to a structured and simplistic notion of secret bits, which does not capture the complexity of privacy protection in unstructured textual data. On the contrary, our lower bound results are more general and hold for arbitrary metric spaces.
\end{document}